\documentclass{article}

\usepackage[nonatbib,preprint]{neurips_2026}

\usepackage[utf8]{inputenc} 
\usepackage[T1]{fontenc}    
\usepackage{url}            
\usepackage{booktabs}       
\usepackage{amsmath,amsfonts}       
\usepackage{nicefrac}       
\usepackage{microtype}      
\usepackage{xcolor}         
\usepackage{multirow}
\usepackage{graphicx}
\usepackage{subfig}
\usepackage{wrapfig}
\usepackage[linesnumbered, ruled]{algorithm2e}
\usepackage[colorlinks,
            linkcolor=blue,
            anchorcolor=blue,
            citecolor=blue]{hyperref}
\usepackage{amssymb, amsthm}
\usepackage{bm}
\usepackage{mathtools}

\newtheorem{assumption}{Assumption}
\newtheorem{lemma}{Lemma}
\newtheorem{theorem}{Theorem}
            
\newcommand{\projectname}{CATA}
\newcommand{\partitle}[1]{\noindent \textbf{#1.}}

\title{CATA: Continual Machine Unlearning via Conflict-Averse Task Arithmetic}

\author{
Shen Lin$^{1}$\thanks{These authors contribute equally.}~~, Junhao Dong$^{2}$\footnotemark[1]~~, Rongjie Chen$^{1}$, Xiaoyu Zhang$^{3}$, Li Xu$^{1}$\thanks{Corresponding author.}~~, and Xiaofeng Chen$^{3}$\\ [2mm]
$^{1}$Fujian Normal University, 
$^{2}$Nanyang Technological University,
$^{3}$Xidian University
}

\begin{document}

\maketitle

\begin{abstract}
Vision-language models (VLMs) have shown remarkable ability in aligning visual and textual representations, enabling a wide range of multimodal applications. However, their large-scale training data inevitably raises concerns about privacy, copyright, and undesirable content, creating a strong need for machine unlearning. While existing studies mainly focus on single-shot unlearning, practical VLM deployment often involves sequential removal requests over time, giving rise to continual machine unlearning. In this work, we make the first attempt to study continual unlearning for VLMs and identify three key challenges in this setting: effectiveness in removing target knowledge, fidelity in preserving retained model utility, and persistence in preventing knowledge re-emergence under sequential updates. To address these challenges, we propose \projectname, a conflict-averse task arithmetic method that represents each forget request as an unlearning task vector. By maintaining historical task vectors and performing sign-aware conflict-averse aggregation, \projectname{} suppresses conflicting update components that may weaken previous forgetting effects. Extensive experiments under both single-shot and continual settings show that \projectname{} outperforms baselines in terms of forgetting effectiveness, model fidelity, and forgetting persistence.
\end{abstract}

\section{Introduction}

Recent advances in vision-language models (VLMs), such as CLIP~\cite{radford2021learning}, have achieved remarkable success in aligning visual and textual representations, enabling a wide range of downstream applications. Despite their success, the large-scale data used for training VLMs inevitably introduces concerns regarding data privacy, copyright protection, the presence of undesirable or sensitive content, and several trustworthy AI challenges \cite{li2023trustworthy, dong2026allies, dong2025confound, dongtug}. To address these issues, machine unlearning~\cite{chundawat2023can,lin2023erm,chen2023boundary,lin2024gdr,tong2025robust,xiao2025reminiscence} has emerged as a promising direction for removing the influence of specific data from trained models without retraining from scratch. However, existing studies primarily focus on single-shot unlearning, where the forget request is assumed to be given at once. In practical VLM deployment, removal requests may arrive sequentially over time, as users, data owners, or regulators continuously request specific data, concepts, or associations to be removed. This motivates the study of \textit{continual machine unlearning}, which aims to process sequential removal requests in an online manner.

Although significant progress has been made in single-shot unlearning for VLMs~\cite{poppi2024safe,li2024single,cai2025targeted,yang2025cliperase,kravets2025zero,zhang2025targeted}, continual machine unlearning for VLMs remains largely underexplored. Directly extending single-shot methods to the continual setting is non-trivial. In single-shot unlearning, the model only needs to remove a fixed target set while maintaining its utility on retained data. In contrast, continual unlearning requires the model to repeatedly process new forgetting requests while preserving the effects of all previous unlearning steps. Since different forget sets may induce different, or even conflicting, parameter updates, a later unlearning step can weaken or partially reverse earlier forgetting effects. This leads to the \textit{knowledge re-emergence} problem, where previously forgotten information becomes accessible again after subsequent updates.

This problem is particularly important for VLMs. Unlike conventional classification models, VLMs encode visual and textual information in a shared cross-modal representation space. Different images, concepts, categories, and text prompts may be semantically correlated through this space. As a result, removing the influence of one forget set may affect related visual-textual associations, while later unlearning requests may unintentionally restore previously removed knowledge. Therefore, continual unlearning for VLMs should satisfy three key requirements: 
(i) \textit{effectiveness}, requiring the model to remove the influence of target data at each step; 
(ii) \textit{fidelity}, requiring the model to preserve performance on retained knowledge and downstream tasks; 
and (iii) \textit{persistence}, requiring previously forgotten information to remain forgotten after subsequent updates.

To address these challenges, we propose a continual machine unlearning method via \textbf{C}onflict-\textbf{A}verse \textbf{T}ask \textbf{A}rithmetic, namely \projectname. The key idea is to represent each forget request as an unlearning task vector and aggregate historical task vectors in a conflict-aware manner. Specifically, for each incoming forget set, \projectname{} first estimates the parameter direction associated with the target knowledge and takes its negative direction as the corresponding unlearning task vector. To reduce noisy and redundant updates, the task vector is sparsified by retaining the most influential parameter components. Then, \projectname{} performs sign-aware aggregation over historical task vectors, preserving direction-consistent components while suppressing conflicting ones. In this way, \projectname{} mitigates conflicts among sequential unlearning requests, reduces the risk of knowledge re-emergence, and naturally scales to longer sequences of unlearning requests by updating the aggregated task vector rather than repeatedly retraining from scratch. We summarize our contributions as follows:
\begin{itemize}
    \item We introduce the problem of continual machine unlearning for vision-language models, where sequential removal requests must be handled without compromising retained knowledge. We highlight knowledge re-emergence as a core challenge in this setting.

    \item We propose Conflict-averse Task Arithmetic, a task-vector-based unlearning framework that resolves conflicts among sequential forgetting directions through sign-aware aggregation, enabling stable continual unlearning.

    \item We conduct extensive evaluations across single-shot, continual, and long-sequence settings, showing that \projectname{} improves forgetting effectiveness, model fidelity, forgetting persistence, and scalability over strong baselines.
\end{itemize}

\section{Related Work}
\label{sec:related_work}

\partitle{Continual learning for VLMs}
Continual learning aims to enable models to learn from a sequence of tasks while mitigating catastrophic forgetting of previously acquired knowledge. Recent studies have extended continual learning to vision-language models (VLMs), focusing on sequential adaptation of multimodal representations and preservation of zero-shot generalization~\cite{jin2020visually,zheng2023preventing,yu2024boosting,ni2023continual,dong2025stabilizing,liu2025cclip}. Early work explored visually grounded continual learning by incrementally learning compositional phrase representations from streaming visual scenes~\cite{jin2020visually}. Subsequent studies investigated continual fine-tuning of CLIP and proposed strategies such as regularization to reduce zero-shot transfer degradation~\cite{zheng2023preventing}, representation constraints to preserve multimodal representations~\cite{ni2023continual,liu2025cclip}, and parameter-efficient adapters to mitigate task interference~\cite{yu2024boosting}.

Although these works are closely related to sequential adaptation in VLMs, their objective is fundamentally different from ours. Continual learning aims to acquire new knowledge without forgetting previous tasks, whereas continual unlearning aims to remove specified knowledge while preserving retained knowledge and preventing previously forgotten information from re-emerging. Therefore, methods designed for continual learning cannot be directly applied to continual unlearning.

\partitle{Machine unlearning for VLMs}
Machine unlearning aims to remove the influence of specified data or concepts from trained models without retraining from scratch. This is particularly important for VLMs such as CLIP~\cite{radford2021learning}, whose large-scale training data may contain private, copyrighted, or undesirable content. Recent works have explored unlearning in VLMs by selectively removing target concepts or associations while preserving general multimodal representations~\cite{poppi2024safe,cheng2024multidelete,cai2025targeted,yang2025cliperase,kravets2025zero,zhang2025targeted}. For example, MultiDelete~\cite{cheng2024multidelete}. separated embeddings associated with the forget set while maintaining unimodal representations for retained data. Other methods constrained unlearning gradients to selected layers~\cite{cai2025targeted}, disentangled and removed cross-modal associations in CLIP~\cite{yang2025cliperase}, or leveraged regularization and synthetic samples for zero-shot class unlearning~\cite{kravets2025zero}.

Despite their effectiveness, existing VLM unlearning methods primarily focus on single-shot settings, where the forget request is processed once. In practical deployment, however, removal requests may arrive sequentially over time. Naively applying single-shot methods to each request can cause conflicts among unlearning updates, leading to degraded model utility and knowledge re-emergence. In contrast, we propose Conflict-averse Task Arithmetic to aggregate sequential forgetting directions while suppressing conflicting updates for continual unlearning tasks.
\section{Preliminaries and Problem Analysis}
\label{sec:preliminaries}

\subsection{Problem Formulation}

Let $\mathcal{D} = \{(x_i, s_i, y_i)\}_{i=1}^{N}$ denote a multimodal dataset, where $x_i \in \mathcal{X}$ represents an image, $s_i \in \mathcal{S}$ denotes the associated text input (e.g., prompt or caption), and $y_i \in \mathcal{Y}$ is the supervision signal (e.g., label or target response). A vision-language model (VLM) $f_{\theta}$ parameterized by $\theta \in \mathbb{R}^d$ is trained via:
\begin{equation}
\theta^{*} = \arg\min_{\theta} \ \mathbb{E}_{(x,s,y)\sim \mathcal{D}} \ \ell(f_{\theta}(x,s), y),
\end{equation}
where $\ell(\cdot)$ denotes the task-specific loss function. We denote the trained model as the initial model before unlearning, i.e., $\theta^{(0)} = \theta^{*}$.

In this paper, we consider a \emph{continual machine unlearning} setting, where unlearning requests arrive sequentially in an online manner. Specifically, we define a sequence of multimodal forget sets:
\begin{equation}
\{\mathcal{D}_{u}^{(1)}, \mathcal{D}_{u}^{(2)}, \dots, \mathcal{D}_{u}^{(T)}\},
\end{equation}
where each $\mathcal{D}_{u}^{(t)} \subset \mathcal{D}$ corresponds to the forget set specified at time step $t$. At each step, the unlearning operator updates the current model according to the incoming forget set:
\begin{equation}
\theta_u^{(t)} = \mathcal{U}(\theta_u^{(t-1)}, \mathcal{D}_{u}^{(t)}).
\end{equation}
This process requires the model to incrementally remove newly specified knowledge while preserving retained knowledge and previously achieved forgetting effects. For each step $t$, we define the cumulative forget set and retained set as:
\begin{equation}
\mathcal{D}_{u}^{(\leq t)} = \bigcup_{j=1}^{t} \mathcal{D}_{u}^{(j)}, 
\quad
\mathcal{D}_{r}^{(t)} = \mathcal{D} \setminus \mathcal{D}_{u}^{(\leq t)}.
\end{equation}

Ideally, the unlearned model at each step should approximate the model retrained from scratch on the retained data:
\begin{equation}
\theta_u^{(t)} \approx 
\arg\min_{\theta} \ 
\mathbb{E}_{(x,s,y)\sim \mathcal{D}_{r}^{(t)}} 
\ell(f_{\theta}(x,s), y).
\end{equation}


\begin{wrapfigure}{r}{0.48\textwidth}
    \centering
    \vspace{-1.5em}
    \includegraphics[width=0.46\textwidth]{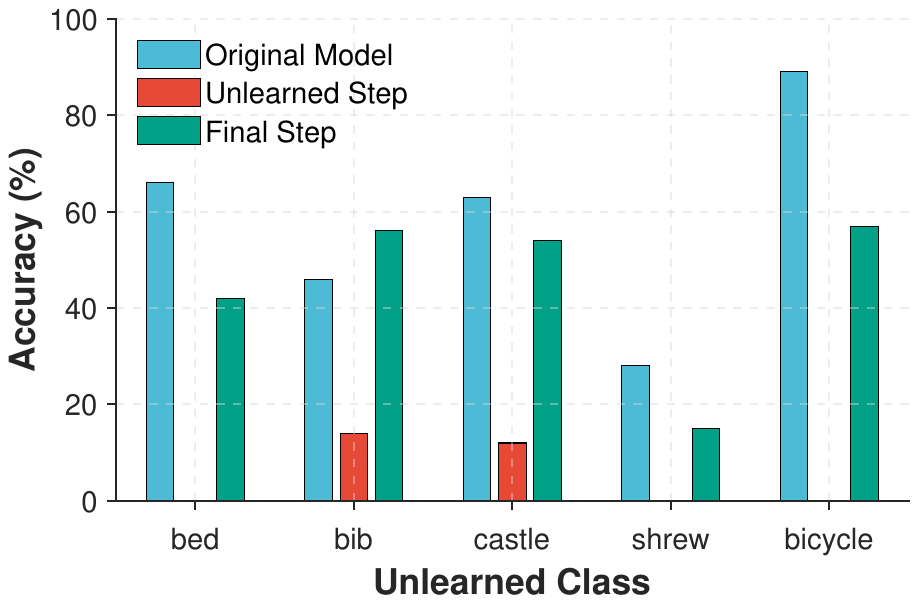}
    \caption{Illustration of knowledge re-emergence. Accuracy on target classes decreases after unlearning but partially recovers at the final step, indicating that previously forgotten knowledge may be restored by later updates.}
    \label{fig:reemergence}
    \vspace{-1em}
\end{wrapfigure}

\subsection{Knowledge Re-emergence Problem}

In continual machine unlearning, a critical challenge is the knowledge re-emergence problem, where knowledge removed in earlier unlearning steps becomes accessible again after subsequent requests, as illustrated in Fig.~\ref{fig:reemergence}. Specifically, the original model exhibits high accuracy on several target classes before unlearning. After each class is unlearned, the accuracy on the corresponding class drops significantly, indicating that the target knowledge has been effectively removed at that step. However, after subsequent unlearning requests are processed, the accuracy of previously unlearned classes partially recovers at the final step. This recovery suggests that later updates may unintentionally restore previously removed knowledge, demonstrating the knowledge re-emergence problem in continual machine unlearning.



\section{Proposed Method}
\label{sec:method}

\subsection{Overview}

As discussed in Section~\ref{sec:preliminaries}, a key challenge in continual machine unlearning is the re-emergence of previously forgotten knowledge after subsequent unlearning requests. We attribute this phenomenon to conflicts among unlearning updates induced by different forget sets. Inspired by task arithmetic for model editing~\cite{ilharco2023editing}, we represent each unlearning request as a task vector in parameter space, which provides an efficient way to compose multiple unlearning operations. However, directly aggregating these task vectors can introduce conflicting update directions, causing different unlearning effects to cancel or interfere with each other and potentially leading to knowledge re-emergence. Based on this observation, we propose \projectname, a conflict-averse task arithmetic framework for continual unlearning in VLMs. As illustrated in Fig.~\ref{fig:overview}, each incoming forget set is first converted into an unlearning task vector by reversing the parameter direction associated with the target knowledge. To reduce noisy and redundant updates, we sparsify each task vector through top-$k\%$ masking, retaining only the most influential parameter components. Given the historically sparse task vectors, \projectname{} estimates the dominant update direction at each parameter dimension and aggregates only direction-consistent components, while discarding conflicting ones that may weaken previous forgetting effects. The aggregated task vector is then applied to the pretrained model to obtain the unlearned model for the current step.

\begin{figure}[htbp]
    \centering
    \includegraphics[width=1\linewidth]{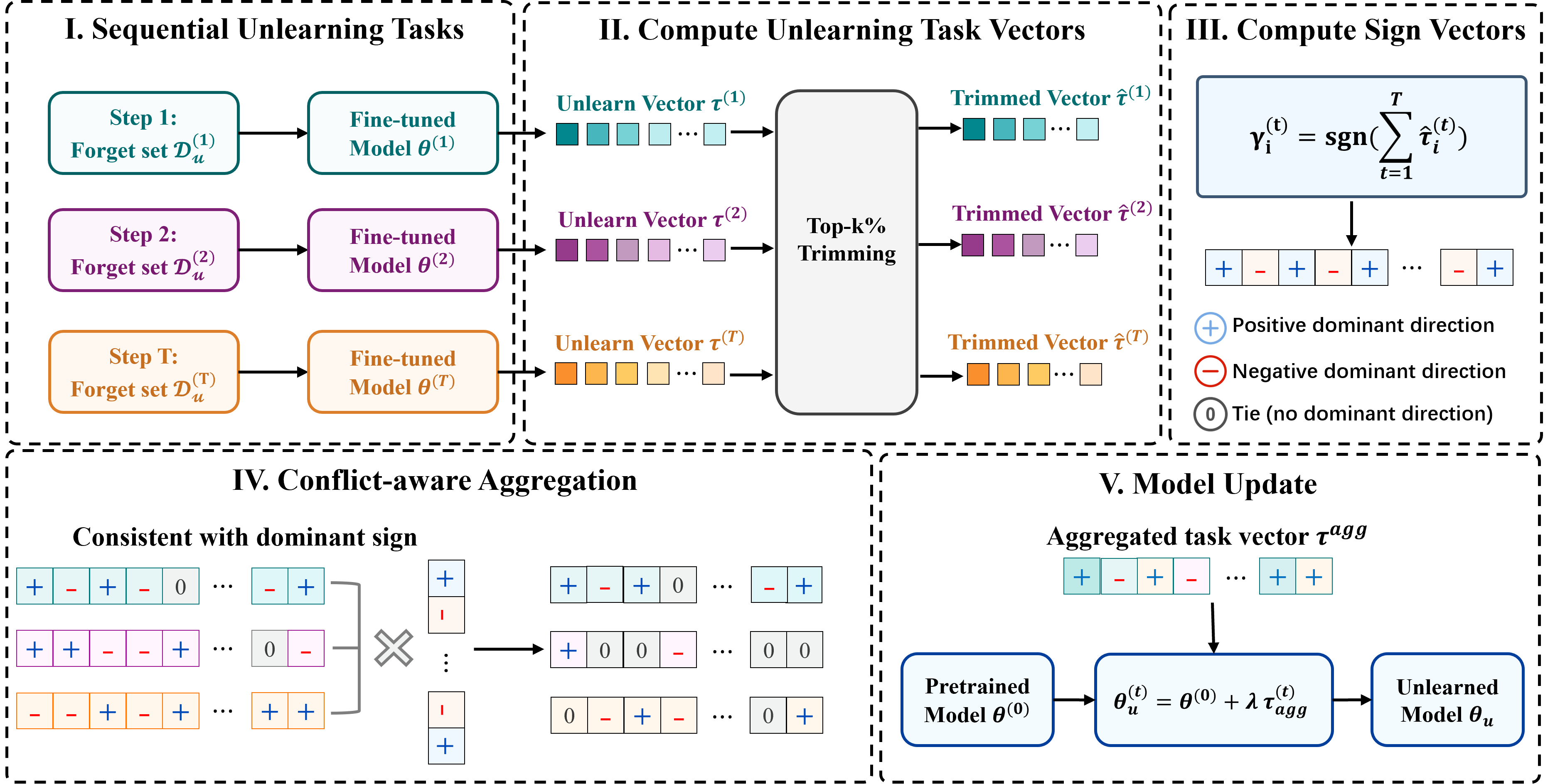}
    \caption{An overview of our proposed \projectname\ method. Each incoming forget set is converted into a sparse unlearning task vector through direction reversal and top-$k\%$ masking. Historical sparse task vectors are then aggregated by sign-aware conflict filtering, which suppresses conflicting components to mitigate the knowledge re-emergence issue.}
    \label{fig:overview}
\end{figure}

\partitle{Unlearning task vector}
For the $t$-th unlearning request, we estimate the parameter direction associated with the target knowledge in $\mathcal{D}_u^{(t)}$. Specifically, starting from the pretrained VLM $\theta^{(0)}$, we fine-tune the model on the current forget set:
\begin{equation}
    \theta_f^{(t)}
    =
    \arg\min_{\theta}
    \mathbb{E}_{(x,s,y)\sim \mathcal{D}_u^{(t)}}
    \ell(f_{\theta}(x,s), y),
\end{equation}
where the optimization is initialized from $\theta^{(0)}$, and $\ell(\cdot)$ denotes the task-specific VLM objective. The corresponding adaptation direction is defined as:
\begin{equation}
    \Delta \theta^{(t)}
    =
    \theta_f^{(t)} - \theta^{(0)}.
\end{equation}
Since this direction captures the parameter displacement associated with the current forget set, we define the unlearning task vector as its negative:
\begin{equation}
    \tau^{(t)}
    =
    -\Delta \theta^{(t)}
    =
    -(\theta_f^{(t)} - \theta^{(0)}).
\end{equation}

Although unlearning requests arrive sequentially, each task vector is constructed with respect to the same pretrained model $\theta^{(0)}$. This makes task vectors from different requests comparable in a shared parameter space, which is essential for subsequent conflict detection and aggregation.

\partitle{Sparse task vector masking}
Dense task vectors may contain many low-magnitude components that have limited impact on forgetting but can introduce noise during continual aggregation. To reduce such redundancy, we sparsify each task vector by retaining only the components with the largest magnitudes. For the unlearning task vector $\tau^{(t)}$, we define a binary mask $\mathbf{m}^{(t)} \in \{0,1\}^{d}$. Let $q_k(|\tau^{(t)}|)$ denote the magnitude threshold corresponding to the top-$k\%$ largest absolute values in $\tau^{(t)}$. The mask is defined as:
\begin{equation}
    \mathbf{m}^{(t)}_i
    =
    \mathbb{I}
    \left(
    |\tau_i^{(t)}|
    \geq
    q_k(|\tau^{(t)}|)
    \right),
\end{equation}
where $\mathbb{I}(\cdot)$ is the indicator function. The sparse task vector is then obtained by:
\begin{equation}
    \hat{\tau}^{(t)}
    =
    \mathbf{m}^{(t)} \odot \tau^{(t)},
\end{equation}
where $\odot$ denotes element-wise multiplication. This masking operation preserves the dominant parameter changes associated with the current unlearning request while filtering out low-impact components.

\partitle{Conflict-averse aggregation}
In continual unlearning, task vectors induced by different forget sets may conflict with each other. For two unlearning requests $a$ and $b$, a sign conflict occurs at parameter dimension $i$ when:
\begin{equation}
    \hat{\tau}_i^{(a)} \cdot \hat{\tau}_i^{(b)} < 0.
\end{equation}
Such conflicting components may cancel each other during aggregation, thereby weakening previous forgetting effects and causing knowledge re-emergence. To address this issue, we maintain a historical task-vector memory:
\begin{equation}
    \mathcal{M}_t
    =
    \{\hat{\tau}^{(1)}, \hat{\tau}^{(2)}, \dots, \hat{\tau}^{(t)}\},
\end{equation}
which stores the sparse task vectors observed up to step $t$. For each parameter dimension $i$, we estimate the dominant unlearning direction through a magnitude-weighted sign vote:
\begin{equation}
    \gamma_i^{(t)}
    =
    \operatorname{sgn}
    \left(
    \sum_{j=1}^{t}
    \hat{\tau}_i^{(j)}
    \right),
\end{equation}
where $\operatorname{sgn}(\cdot)$ is the sign function. If the summed update is zero, we set $\gamma_i^{(t)}=0$, indicating that no dominant direction exists at this dimension. We then select the components that are consistent with the dominant direction:
\begin{equation}
    A_i^{(t)}
    =
    \left\{
    j \in \{1,\dots,t\}
    \mid
    \hat{\tau}_i^{(j)} \neq 0,
    \operatorname{sgn}(\hat{\tau}_i^{(j)}) = \gamma_i^{(t)}
    \right\}.
\end{equation}
Components with signs inconsistent with $\gamma_i^{(t)}$ are treated as conflict-inducing updates and excluded from aggregation. The aggregated task vector at step $t$ is computed as:
\begin{equation}
    \tau^{i,(t)}_{\mathrm{agg}}
    =
    \begin{cases}
    \frac{1}{|A_i^{(t)}|}
    \sum_{j \in A_i^{(t)}}
    \hat{\tau}_i^{(j)},
    &
    \gamma_i^{(t)} \neq 0 \ \text{and} \ |A_i^{(t)}| > 0,
    \\[6pt]
    0,
    &
    \text{otherwise}.
    \end{cases}
\end{equation}
By aggregating only sign-consistent components, \projectname\ preserves the dominant unlearning direction while suppressing updates that may interfere with previous forgetting effects.

\partitle{Model update}
After obtaining the aggregated task vector $\tau^{(t)}_{\mathrm{agg}}$, we derive the unlearned model at step $t$ as:
\begin{equation}
    \theta_u^{(t)}
    =
    \theta^{(0)}
    +
    \lambda
    \tau^{(t)}_{\mathrm{agg}},
\end{equation}
where $\lambda$ controls the strength of unlearning. Although unlearning requests arrive sequentially, all task vectors are defined relative to the pretrained model $\theta^{(0)}$. Therefore, applying $\tau^{(t)}_{\mathrm{agg}}$ to $\theta^{(0)}$ ensures that the current and historical unlearning directions are integrated in a shared parameter space. The resulting model $\theta_u^{(t)}$ is expected to forget the cumulative forget set $\mathcal{D}_u^{(\leq t)}$ while preserving the knowledge associated with the retained data. The overall procedure of \projectname{} is summarized in Algorithm~\ref{alg:cata}.

\begin{algorithm}[htbp]
\caption{Conflict-averse Task Arithmetic}
\label{alg:cata}

Initialize task-vector memory $\mathcal{M}_0 \leftarrow \emptyset$\;

\For{$t = 1$ \KwTo $T$}{
    Fine-tune $\theta^{(0)}$ on $\mathcal{D}_u^{(t)}$ to obtain $\theta_f^{(t)}$\;
    
    Compute $\tau^{(t)} = -(\theta_f^{(t)}-\theta^{(0)})$ and apply top-$k\%$ masking to obtain $\hat{\tau}^{(t)}=\mathbf{m}^{(t)}\odot\tau^{(t)}$\;
    
    Update memory $\mathcal{M}_t \leftarrow \mathcal{M}_{t-1}\cup\{\hat{\tau}^{(t)}\}$\;
    
    \For{each parameter dimension $i$}{
        Compute dominant sign $\gamma_i^{(t)}=\operatorname{sgn}\!\left(\sum_{j=1}^{t}\hat{\tau}_i^{(j)}\right)$\;
        
        Select consistent indices $A_i^{(t)}=\{j\leq t \mid \hat{\tau}_i^{(j)}\neq 0,\operatorname{sgn}(\hat{\tau}_i^{(j)})=\gamma_i^{(t)}\}$\;
        
        Aggregate non-conflicting components:
        \[
        \tau_{\mathrm{agg},i}^{(t)}
        =
        \begin{cases}
        \frac{1}{|A_i^{(t)}|}\sum_{j\in A_i^{(t)}}\hat{\tau}_i^{(j)}, & |A_i^{(t)}|>0,\\
        0, & |A_i^{(t)}|=0.
        \end{cases}
        \]
    }
    
    Update $\theta_u^{(t)}=\theta^{(0)}+\lambda\tau_{\mathrm{agg}}^{(t)}$\;
}

\end{algorithm}


\section{Experiments}
\label{sec:experiments}

\partitle{Datasets, models, and baselines} In the main experiments, we evaluate \projectname{} on ImageNet-1K~\cite{deng2009imagenet} using four representative CLIP backbones, including ViT-B-32, ViT-L-14, RN50, and RN101. Additional results on CIFAR-10 and CIFAR-100~\cite{krizhevsky2009learning} are provided in the supplementary material. Furthermore, we compare \projectname{} with representative machine unlearning baselines, including FT~\cite{warnecke2023machine}, GA~\cite{thudi2022unrolling}, Fisher~\cite{golatkar2020eternal}, LIP~\cite{foster2024information}, EMMN~\cite{chundawat2023zero}, CLIP-LIP~\cite{kravets2025zero}, and TIFS~\cite{zhang2025targeted}. 

\partitle{Implementation details} 
To obtain the task vector, we fine-tune the CLIP visual encoder from the original pre-trained weights for 4 epochs using the AdamW optimizer with a learning rate of $1 \times 10^{-5}$ and a weight decay of $0.1$. The scaling factor $\lambda$ is set to $0.7$ and the top-$k$ trimming ratio is set to $0.3$ across all backbones. Additional implementation details and baseline configurations are provided in the supplementary material.

\partitle{Metrics} 
Following previous work \cite{cai2025targeted}, we report zero-shot classification accuracy on the targeted set (\textit{Target}), the retaining set (\textit{Retain}), and the full dataset (\textit{All}). To further evaluate model fidelity and generalization, we test on several unseen datasets, including Food~\cite{bossard2014food}, STL~\cite{coates2011analysis}, and ObjectNet~\cite{barbu2019objectnet}. We compute a normalized score as $\min(Acc_{\text{unlearn}} / Acc_{\text{original}}, 1)$, where the score is capped at 1 to indicate full preservation of the original model performance. For the \textit{Target}, we use $100 - \textit{Score}_{\text{Target}}$ when computing the average score to reflect the degree of forgetting. We report the \textit{Avg. Score} by averaging the normalized scores across all evaluated datasets. For continual unlearning, we use $\Delta$ to measure the difference between the final accuracy and that at the corresponding forgetting step for each target class and report the \textit{Avg. $\Delta$} by averaging $\Delta$ over all forgetting classes. 

\subsection{Main Results}

\partitle{Comparison experiments in continual unlearning}
As shown in Table~\ref{tab:continual-imagenet}, we compare \projectname{} with baselines under the continual unlearning setting on ImageNet-1K. The results show that \projectname{} achieves the best balance among these objectives on both ViT-B/32 and ViT-L/14. It consistently suppresses the accuracy of forgotten classes to near zero and maintains the lowest \textit{Avg. $\Delta$}, showing that the removed knowledge remains stably forgotten after subsequent updates. Meanwhile, \projectname{} obtains the highest \textit{Avg. Score} on both backbones, indicating better preservation of retained knowledge and downstream generalization. In contrast, the baselines typically fail in at least one aspect. FT preserves high retain and downstream performance but shows weak forgetting effectiveness, as the target accuracy often remains high after the corresponding forgetting step. GA achieves stronger forgetting than FT, but causes noticeable degradation in retained and downstream performance, especially under longer unlearning sequences. LIP aggressively suppresses target classes, but severely damages model utility, leading to very low retain and transfer performance. These results demonstrate that \projectname{} achieves more effective, persistent, and high-fidelity continual unlearning by better balancing target removal and utility preservation. Additional results on CIFAR-100 are provided in the supplementary material.

\begin{table*}[htbp]
\centering
\caption{Performance comparison on ImageNet-1K under continual unlearning. Underlined values indicate the class forgotten at the corresponding step. }
\label{tab:continual-imagenet}
\scriptsize
\setlength{\tabcolsep}{3.2pt}
\resizebox{\textwidth}{!}{
\begin{tabular}{ccc|ccccc|cccccccc}
\toprule[1pt]
\multirow{2}{*}{Backbone} 
& \multirow{2}{*}{Method} 
& \multirow{2}{*}{Step} 
& \multicolumn{5}{c|}{Target$\downarrow$} 
& \multirow{2}{*}{Retain$\uparrow$} 
& \multirow{2}{*}{All$\uparrow$} 
& \multirow{2}{*}{Food$\uparrow$} 
& \multirow{2}{*}{STL$\uparrow$} 
& \multirow{2}{*}{ObjectNet$\uparrow$} 
& \multirow{2}{*}{CIFAR-10$\uparrow$} 
& \multirow{2}{*}{Avg. $\Delta\downarrow$} 
& \multirow{2}{*}{Avg. Score$\uparrow$} \\
& & 
& Class 1
& Class 2
& Class 3 
& Class 4
& Class 5 
& & & & & & & & \\
\midrule[1pt]

\multirow{21}{*}{ViT-B/32}
& Original & Step 0 
& $64.00$ & $24.00$ & $32.00$ & $22.00$ & $90.00$ 
& $59.36$ & $59.29$ & $82.05$ & $97.36$ & $30.27$ & $88.79$ & -- & -- \\
\cmidrule{2-16}

& \multirow{5}{*}{GA \cite{thudi2022unrolling}}
& Step 1 & $\underline{2.00}$ & $16.00$ & $30.00$ & $24.00$ & $86.00$ & $57.69$ & $57.56$ & $81.80$ & $96.96$ & $28.43$ & $81.80$ & -- & -- \\
& & Step 2 & $0.00$ & $\underline{0.00}$ & $34.00$ & $44.00$ & $84.00$ & $55.25$ & $55.13$ & $79.30$ & $95.80$ & $27.79$ & $80.08$ & -- & -- \\
& & Step 3 & $2.00$ & $0.00$ & $\underline{2.00}$ & $36.00$ & $74.00$ & $52.04$ & $51.89$ & $77.00$ & $95.47$ & $25.92$ & $78.58$ & -- & -- \\
& & Step 4 & $0.00$ & $0.00$ & $0.00$ & $\underline{12.00}$ & $62.00$ & $48.34$ & $48.17$ & $74.47$ & $95.16$ & $23.56$ & $78.20$ & -- & -- \\
& & Step 5 & $0.00$ & $0.00$ & $0.00$ & $14.00$ & $\underline{10.00}$ & $44.01$ & $43.81$ & $70.10$ & $94.74$ & $20.02$ & $95.70$ & $1.20$ & $84.54$ \\
\cmidrule{2-16}

& \multirow{5}{*}{FT \cite{warnecke2023machine}}
& Step 1 & $\underline{56.00}$ & $14.00$ & $50.00$ & $26.00$ & $88.00$ & $62.45$ & $62.37$ & $82.17$ & $97.74$ & $30.46$ & $89.61$ & -- & -- \\
& & Step 2 & $52.00$ & $\underline{12.00}$ & $46.00$ & $28.00$ & $90.00$ & $63.26$ & $63.17$ & $81.22$ & $97.92$ & $30.55$ & $88.73$ & -- & -- \\
& & Step 3 & $42.00$ & $10.00$ & $\underline{50.00}$ & $30.00$ & $88.00$ & $63.50$ & $63.40$ & $79.93$ & $97.60$ & $30.58$ & $87.24$ & -- & -- \\
& & Step 4 & $46.00$ & $14.00$ & $46.00$ & $\underline{30.00}$ & $82.00$ & $63.63$ & $63.53$ & $78.72$ & $97.55$ & $30.71$ & $86.92$ & -- & -- \\
& & Step 5 & $40.00$ & $10.00$ & $46.00$ & $34.00$ & $\underline{88.00}$ & $63.85$ & $63.75$ & $78.02$ & $97.47$ & $30.78$ & $85.91$ & $5.20$ & $55.32$ \\
\cmidrule{2-16}

& \multirow{5}{*}{LIP \cite{foster2024information}}
& Step 1 & $\underline{0.00}$ & $0.00$ & $0.00$ & $0.00$ & $0.00$ & $0.71$ & $0.71$ & $12.84$ & $41.09$ & $2.27$ & $35.16$ & -- & -- \\
& & Step 2 & $0.00$ & $\underline{0.00}$ & $0.00$ & $0.00$ & $0.00$ & $0.15$ & $0.15$ & $1.06$ & $9.97$ & $0.31$ & $11.97$ & -- & -- \\
& & Step 3 & $0.00$ & $0.00$ & $\underline{0.00}$ & $0.00$ & $0.00$ & $0.17$ & $0.17$ & $1.10$ & $7.19$ & $0.29$ & $12.10$ & -- & -- \\
& & Step 4 & $32.00$ & $0.00$ & $0.00$ & $\underline{0.00}$ & $0.00$ & $0.10$ & $0.14$ & $0.91$ & $10.28$ & $0.26$ & $10.44$ & -- & -- \\
& & Step 5 & $74.00$ & $0.00$ & $0.00$ & $0.00$ & $\underline{0.00}$ & $0.01$ & $0.09$ & $0.94$ & $10.40$ & $0.25$ & $10.55$ & $14.80$ & $37.19$ \\
\cmidrule{2-16}

& \multirow{5}{*}{\projectname\ (ours)}
& Step 1 & $\underline{0.00}$ & $20.00$ & $26.00$ & $28.00$ & $78.00$ & $56.22$ & $56.09$ & $78.18$ & $96.83$ & $27.23$ & $82.44$ & -- & -- \\
& & Step 2 & $0.00$ & $\underline{0.00}$ & $24.00$ & $36.00$ & $70.00$ & $55.17$ & $55.02$ & $78.42$ & $95.89$ & $27.17$ & $81.19$ & -- & -- \\
& & Step 3 & $0.00$ & $0.00$ & $\underline{0.00}$ & $22.00$ & $58.00$ & $54.73$ & $54.54$ & $78.06$ & $96.20$ & $27.06$ & $81.64$ & -- & -- \\
& & Step 4 & $0.00$ & $0.00$ & $0.00$ & $\underline{0.00}$ & $66.00$ & $54.07$ & $53.86$ & $78.04$ & $96.38$ & $26.85$ & $85.13$ & -- & -- \\
& & Step 5 & $0.00$ & $0.00$ & $0.00$ & $0.00$ & $\underline{2.00}$ & $54.33$ & $54.06$ & $77.20$ & $96.69$ & $26.42$ & $84.01$ & $\textbf{0.00}$ & $\textbf{95.98}$ \\

\midrule[1pt]

\multirow{21}{*}{ViT-L/14}
& Original & Step 0
& $38.00$ & $46.00$ & $34.00$ & $52.00$ & $88.00$ 
& $71.72$ & $71.62$ & $92.05$ & $99.41$ & $51.86$ & $95.32$ & -- & -- \\
\cmidrule{2-16}

& \multirow{5}{*}{GA \cite{thudi2022unrolling}}
& Step 1 & $\underline{6.00}$ & $82.00$ & $32.00$ & $56.00$ & $76.00$ & $69.95$ & $69.85$ & $91.50$ & $99.35$ & $51.90$ & $90.48$ & -- & -- \\
& & Step 2 & $2.00$ & $\underline{12.00}$ & $36.00$ & $48.00$ & $26.00$ & $64.96$ & $64.76$ & $88.91$ & $97.88$ & $50.45$ & $87.30$ & -- & -- \\
& & Step 3 & $0.00$ & $4.00$ & $\underline{2.00}$ & $34.00$ & $0.00$ & $43.76$ & $43.58$ & $46.30$ & $73.88$ & $45.70$ & $73.81$ & -- & -- \\
& & Step 4 & $0.00$ & $2.00$ & $0.00$ & $\underline{6.00}$ & $0.00$ & $23.35$ & $23.24$ & $11.69$ & $44.90$ & $34.15$ & $48.08$ & -- & -- \\
& & Step 5 & $0.00$ & $2.00$ & $0.00$ & $10.00$ & $\underline{2.00}$ & $6.04$ & $6.02$ & $5.30$ & $42.27$ & $26.29$ & $43.86$ & $4.40$ & $57.81$ \\
\cmidrule{2-16}

& \multirow{5}{*}{FT \cite{warnecke2023machine}}
& Step 1 & $\underline{58.00}$ & $18.00$ & $32.00$ & $62.00$ & $78.00$ & $75.36$ & $75.23$ & $91.72$ & $99.28$ & $52.39$ & $93.87$ & -- & -- \\
& & Step 2 & $28.00$ & $\underline{46.00}$ & $50.00$ & $64.00$ & $66.00$ & $75.81$ & $75.68$ & $90.36$ & $99.30$ & $50.75$ & $87.99$ & -- & -- \\
& & Step 3 & $62.00$ & $16.00$ & $\underline{52.00}$ & $62.00$ & $68.00$ & $76.40$ & $76.28$ & $90.52$ & $99.41$ & $52.09$ & $92.16$ & -- & -- \\
& & Step 4 & $54.00$ & $14.00$ & $50.00$ & $\underline{68.00}$ & $96.00$ & $76.29$ & $76.19$ & $90.08$ & $99.29$ & $51.79$ & $91.54$ & -- & -- \\
& & Step 5 & $66.00$ & $56.00$ & $48.00$ & $56.00$ & $\underline{88.00}$ & $76.69$ & $76.62$ & $89.23$ & $99.25$ & $51.87$ & $91.94$ & $6.80$ & $42.08$ \\
\cmidrule{2-16}

& \multirow{5}{*}{LIP \cite{foster2024information}}
& Step 1 & $\underline{0.00}$ & $0.00$ & $0.00$ & $0.00$ & $0.00$ & $2.30$ & $2.29$ & $18.55$ & $55.75$ & $14.27$ & $31.16$ & -- & -- \\
& & Step 2 & $0.00$ & $\underline{0.00}$ & $0.00$ & $0.00$ & $0.00$ & $0.17$ & $0.17$ & $1.81$ & $10.86$ & $2.90$ & $10.06$ & -- & -- \\
& & Step 3 & $18.00$ & $0.00$ & $\underline{0.00}$ & $0.00$ & $0.00$ & $0.07$ & $0.08$ & $1.14$ & $9.07$ & $2.54$ & $9.75$ & -- & -- \\
& & Step 4 & $36.00$ & $0.00$ & $0.00$ & $\underline{0.00}$ & $0.00$ & $0.07$ & $0.11$ & $0.82$ & $9.22$ & $2.52$ & $9.71$ & -- & -- \\
& & Step 5 & $48.00$ & $0.00$ & $0.00$ & $0.00$ & $\underline{0.00}$ & $0.04$ & $0.09$ & $0.71$ & $9.22$ & $2.50$ & $9.75$ & $9.60$ & $36.27$ \\
\cmidrule{2-16}

& \multirow{5}{*}{\projectname\ (ours)}
& Step 1 & $\underline{0.00}$ & $66.00$ & $36.00$ & $46.00$ & $88.00$ & $70.66$ & $70.55$ & $91.22$ & $99.41$ & $51.19$ & $89.91$ & -- & -- \\
& & Step 2 & $0.00$ & $\underline{0.00}$ & $40.00$ & $50.00$ & $90.00$ & $70.69$ & $70.52$ & $91.23$ & $99.36$ & $50.42$ & $90.89$ & -- & -- \\
& & Step 3 & $0.00$ & $0.00$ & $\underline{0.00}$ & $52.00$ & $86.00$ & $70.71$ & $70.49$ & $91.13$ & $99.40$ & $50.59$ & $92.37$ & -- & -- \\
& & Step 4 & $0.00$ & $0.00$ & $0.00$ & $\underline{0.00}$ & $90.00$ & $69.92$ & $69.66$ & $90.99$ & $99.22$ & $50.17$ & $93.65$ & -- & -- \\
& & Step 5 & $0.00$ & $0.00$ & $0.00$ & $8.00$ & $\underline{10.00}$ & $70.22$ & $69.89$ & $91.11$ & $99.31$ & $50.15$ & $93.26$ & ${\textbf{1.60}}$ & ${\textbf{96.56}}$ \\

\bottomrule[1pt]
\end{tabular}}
\vspace{-1em}
\end{table*}

\begin{table*}[htbp]
\centering
\caption{Performance comparison on ImageNet-1K in single-shot unlearning.}
\resizebox{\textwidth}{!}{
\begin{tabular}{ccccccccccc}
\toprule
\multirow{2}{*}{Backbone} & \multirow{2}{*}{Method} & \multicolumn{3}{c}{ImageNet} & \multirow{2}{*}{Food$\uparrow$} & \multirow{2}{*}{STL$\uparrow$} & \multirow{2}{*}{ObjectNet$\uparrow$} & \multirow{2}{*}{CIFAR-10$\uparrow$} & \multirow{2}{*}{Avg. Score$\uparrow$} \\ \cmidrule{3-5}
& & Target$\downarrow$ & Retain$\uparrow$ & All$\uparrow$ & & & &  &  \\ 

\midrule

\multirow{9}{*}{RN50}
& Original  
            & $54.10$ & $66.61$ & $65.37$ & $76.49$ 
            & $93.75$ & $25.83$ & $68.84$ & --\\ \cmidrule{2-10}
& FT  \cite{warnecke2023machine}        
            & $4.00_{5.88}$  & $0.26_{0.53}$  & $24.80_{47.75}$ & $30.34_{39.67}$ & $67.05_{71.52}$ & $9.58_{37.09}$  & $12.26_{17.81}$ & $44.07$ \\
& GA \cite{thudi2022unrolling}       
            & $14.50_{21.32}$ & $35.40_{72.73}$ & $35.39_{68.14}$ & $51.39_{67.18}$ & $77.76_{82.94}$ & $14.18_{54.90}$ & $14.08_{20.45}$ & $63.58$ \\
& Fisher \cite{golatkar2020eternal}    
            & $0.20_{0.29}$  & $0.34_{0.70}$  & $1.09_{2.10}$  & $0.23_{0.30}$  & $10.69_{11.40}$ & $1.14_{4.41}$  & $10.12_{14.70}$ & $19.04$\\
& LIP \cite{foster2024information}      
            & $0.95_{1.40}$  & $1.75_{3.60}$  & $1.14_{2.19}$  & $0.23_{0.30}$  & $10.50_{11.20}$ & $0.59_{2.28}$  & $10.46_{15.19}$ & $19.06$\\
& EMMN \cite{chundawat2023zero}     
            & $10.00_{14.71}$ & $36.67_{75.34}$ & $32.25_{62.09}$ & $47.48_{62.07}$ & $65.21_{69.56}$ & $9.65_{37.36}$  & $11.75_{17.07}$ & $58.40$ \\
& CLIP-LIP \cite{kravets2025zero} 
            & $37.00_{54.41}$ & $52.77_{100.00}$ & $53.78_{100.00}$ & $76.47_{99.97}$ & $93.80_{100.00}$ & $25.80_{99.88}$ & $67.77_{98.45}$ & $91.98$ \\
& TIFS \cite{zhang2025targeted}    
            & $0.00_{0.00}$  & $44.67_{91.78}$ & $47.34_{91.14}$ & $79.00_{100.00}$ & $83.05_{88.59}$ & $16.95_{65.62}$ & $83.80_{100.00}$ & $91.01$ \\ \cmidrule{2-10}
& \projectname \ (ours)       
            & $2.00_{2.94}$  & $51.73_{100.00}$ & $51.68_{99.50}$ & $76.26_{99.70}$ & $94.03_{100.00}$ & $25.22_{97.64}$ & $65.94_{95.79}$ & $\textbf{98.53}$ \\

\midrule
\multirow{8}{*}{RN101}
& Original  
            & $80.00$ & $54.67$ & $54.33$ & $81.16$ & $96.46$ & $29.16$ & $73.82$ & --\\ \cmidrule{2-10}
& FT  \cite{warnecke2023machine}        
            & ${0.00}_{0.00}$ & $0.51_{0.93}$  & $27.62_{50.84}$ & $38.54_{47.49}$ & $74.75_{77.49}$ & $13.26_{45.47}$ & $15.78_{21.38}$ & $49.09$\\
& GA  \cite{thudi2022unrolling}      
            & $32.00_{40.00}$ & $38.76_{70.90}$ & $38.75_{71.32}$ & $58.44_{72.01}$ & $84.45_{87.55}$ & $18.62_{63.85}$ & $18.73_{25.37}$ & $64.43$\\
& Fisher \cite{golatkar2020eternal}    
            & $1.12_{1.40}$  & $0.38_{0.70}$  & $0.27_{0.50}$  & $0.24_{0.30}$  & $9.94_{10.30}$  & $0.41_{1.41}$  & $13.44_{18.21}$ & $18.57$\\
& LIP  \cite{foster2024information}     
            & $0.24_{0.30}$  & $0.27_{0.49}$  & $1.74_{3.20}$  & $0.16_{0.20}$  & $11.19_{11.60}$ & $0.79_{2.71}$  & $11.89_{16.11}$ & $19.14$ \\
& EMMN \cite{chundawat2023zero}    
            & $68.00_{85.00}$ & $29.33_{53.65}$ & $26.43_{48.65}$ & $42.44_{52.29}$ & $72.46_{75.12}$ & $11.97_{41.05}$ & $17.31_{23.45}$ & $44.17$\\
& CLIP-LIP \cite{kravets2025zero} & 
            $2.00_{2.50}$  & $50.84_{92.99}$ & $51.49_{94.77}$ & $78.50_{96.72}$ & $96.42_{99.96}$ & $27.66_{94.86}$ & $70.08_{94.93}$ & $95.96$ \\ 
& TIFS \cite{zhang2025targeted} 
            & $0.00_{0.00}$ & $50.17_{91.77}$ & $49.51_{91.13}$ & $70.50_{86.87}$ & $85.45_{88.59}$ & $19.13_{65.60}$ & $89.86_{100.00}$ & $89.14$\\ \cmidrule{2-10}
& \projectname \ (ours)       
            & $0.00_{0.00}$  & $53.06_{97.06}$ & $52.96_{97.48}$ & $78.98_{97.31}$ & $96.45_{100.00}$ & $27.27_{93.52}$ & $72.04_{97.59}$ & $\textbf{97.56}$ \\

\bottomrule
\end{tabular}}

\label{tab:single-shot imagenet}
\end{table*}

\partitle{Comparison experiments in single-shot unlearning}
Table~\ref{tab:single-shot imagenet} compares \projectname{} with representative unlearning methods on ImageNet-1K under the single-shot setting. The results show that \projectname{} achieves the best \textit{Avg. Score} on both RN50 and RN101, indicating that it preserves the original model utility more effectively across in-domain and transfer benchmarks after unlearning. Compared with FT, GA, Fisher, and LIP, which often reduce target accuracy at the cost of severe degradation on retain classes and downstream datasets, \projectname{} maintains substantially stronger retention and generalization performance. CLIP-LIP preserves model utility well but leaves higher target accuracy in some settings, suggesting insufficient forgetting. TIFS achieves strong target suppression, but its average score is lower due to performance drops on retain or transfer benchmarks. In contrast, \projectname{} achieves low target accuracy while maintaining model performance, demonstrating a better balance between forgetting effectiveness and model fidelity in single-shot unlearning. Additional results on CIFAR-10 are provided in the supplementary material.

\begin{figure}[htbp]
    \centering
    \includegraphics[width=\columnwidth]{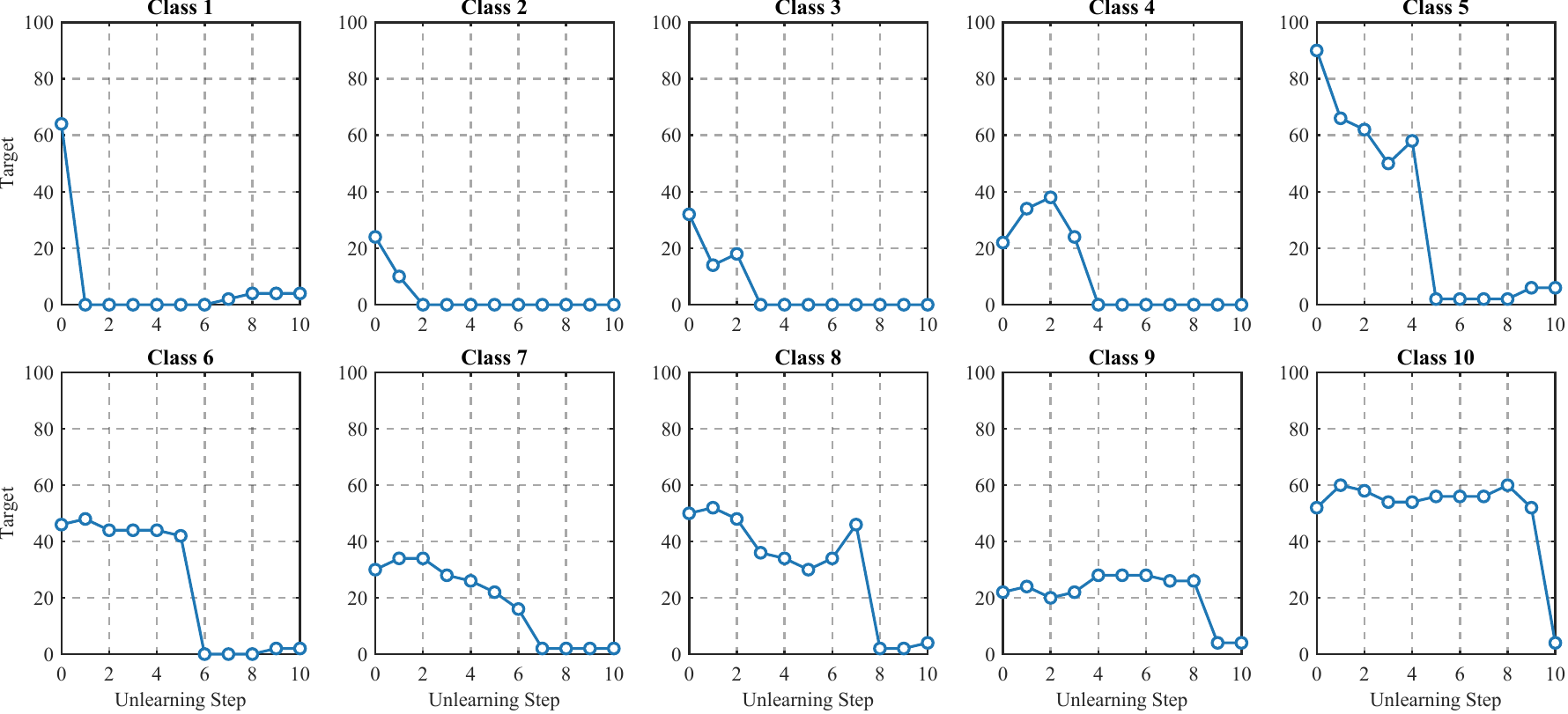}
    \caption{Scalability evaluation on ImageNet-1K in continual unlearning. Each subplot shows the target accuracy of one removed class across continual unlearning steps.}
    \label{fig:scalability}
\end{figure}

\partitle{Scalability evaluation}
We evaluate scalability by progressively increasing the number of unlearning steps, with each step removing one target class. Fig.~\ref{fig:scalability} reports the target accuracy of each removed class across the unlearning sequence. The accuracy of most target classes quickly drops to near zero after the corresponding unlearning step and remains low in subsequent steps, indicating stable forgetting without obvious knowledge re-emergence. These results suggest that \projectname\ can maintain effective target removal as the number of unlearning requests increases. Besides, the detailed step-wise results in the supplementary material (Table~\ref{tab:vitB_in10_lam09}) indicate that the retain and overall accuracies remain stable as the number of unlearning steps increases. This demonstrates that \projectname{} scales to longer unlearning sequences without significant degradation in model utility.

\begin{table*}[htbp]
\centering
\caption{Ablation study of aggregation strategies on ImageNet-1K in continual unlearning. Underlined values indicate the class forgotten at the corresponding step.}
\label{tab:aggregation}
\scriptsize
\setlength{\tabcolsep}{3.5pt}
\resizebox{\textwidth}{!}{
\begin{tabular}{cc|ccccc|cccccccc}
\toprule[1pt]
\multirow{2}{*}{Aggregation}
& \multirow{2}{*}{Step} 
& \multicolumn{5}{c|}{Target$\downarrow$} 
& \multirow{2}{*}{Retain$\uparrow$} 
& \multirow{2}{*}{All$\uparrow$} 
& \multirow{2}{*}{Food$\uparrow$} 
& \multirow{2}{*}{STL$\uparrow$} 
& \multirow{2}{*}{ObjectNet$\uparrow$} 
& \multirow{2}{*}{CIFAR-10$\uparrow$} 
& \multirow{2}{*}{Avg. $\Delta\downarrow$} 
& \multirow{2}{*}{Avg. Score$\uparrow$} \\
& 
& Class 1 
& Class 2  
& Class 3 
& Class 4
& Class 5 
& & & & & & & & \\
\midrule[1pt]

Original & Step 0 
& $64.00$ & $24.00$ & $32.00$ & $22.00$ & $90.00$ 
& $59.36$ & $59.29$ & $82.05$ & $97.36$ & $30.27$ & $88.79$ 
& -- & -- \\
\cmidrule{1-15}

\multirow{5}{*}{Naive Avg.}
& Step 1 & $\underline{0.00}$ & $18.00$ & $20.00$ & $32.00$ & $72.00$ 
& $55.84$ & $55.70$ & $77.52$ & $96.54$ & $27.14$ & $81.28$ 
& -- & -- \\
& Step 2 & $0.00$ & $\underline{0.00}$ & $28.00$ & $34.00$ & $72.00$ 
& $56.57$ & $56.42$ & $80.29$ & $96.20$ & $28.32$ & $82.22$ 
& -- & -- \\
& Step 3 & $0.00$ & $0.00$ & $\underline{0.00}$ & $26.00$ & $68.00$ 
& $56.87$ & $56.68$ & $80.53$ & $96.71$ & $28.52$ & $83.72$ 
& -- & -- \\
& Step 4 & $4.00$ & $0.00$ & $0.00$ & $\underline{2.00}$ & $76.00$ 
& $57.25$ & $57.05$ & $81.28$ & $97.08$ & $28.82$ & $86.66$ 
& -- & -- \\
& Step 5 & $16.00$ & $2.00$ & $0.00$ & $8.00$ & $\underline{34.00}$ 
& $57.75$ & $57.53$ & $81.34$ & $97.19$ & $28.82$ & $86.39$ 
& $4.80$ & $88.94$ \\
\cmidrule{1-15}

\multirow{5}{*}{Ours}
& Step 1 & $\underline{0.00}$ & $20.00$ & $26.00$ & $28.00$ & $78.00$ 
& $56.22$ & $56.09$ & $78.18$ & $96.83$ & $27.23$ & $82.44$ 
& -- & -- \\
& Step 2 & $0.00$ & $\underline{0.00}$ & $24.00$ & $36.00$ & $70.00$ 
& $55.17$ & $55.02$ & $78.42$ & $95.89$ & $27.17$ & $81.19$ 
& -- & -- \\
& Step 3 & $0.00$ & $0.00$ & $\underline{0.00}$ & $22.00$ & $58.00$ 
& $54.73$ & $54.54$ & $78.06$ & $96.20$ & $27.06$ & $81.64$ 
& -- & -- \\
& Step 4 & $0.00$ & $0.00$ & $0.00$ & $\underline{0.00}$ & $66.00$ 
& $54.07$ & $53.86$ & $78.04$ & $96.38$ & $26.85$ & $85.13$ 
& -- & -- \\
& Step 5 & $0.00$ & $0.00$ & $0.00$ & $0.00$ & $\underline{2.00}$ 
& $54.33$ & $54.06$ & $77.20$ & $96.69$ & $26.42$ & $84.01$ &
${\textbf{0.00}}$ & ${\textbf{95.98}}$ \\

\bottomrule[1pt]
\end{tabular}}
\end{table*}

\subsection{Ablation Study}

\partitle{Impact of the aggregation strategy} 
Table~\ref{tab:aggregation} evaluates the role of conflict-averse aggregation. Naive averaging directly combines task vectors and therefore suffers from conflicting update directions, leading to incomplete forgetting and noticeable knowledge re-emergence. In contrast, our conflict-averse aggregation suppresses inconsistent components, achieving more complete forgetting and reducing Avg. $\Delta$ to zero. Although it introduces a mild drop in retained performance, it yields a higher Avg. Score, demonstrating a better balance between effectiveness, fidelity, and persistence. These results show that direct aggregation is insufficient for continual unlearning, and conflict-aware aggregation is essential for stable sequential forgetting.

\partitle{Impact of the scaling factor $\lambda$} 
As shown in Fig. \ref{fig:lambda}, we evaluate the effect of the scaling factor $\lambda$ in continual unlearning. As $\lambda$ increases, the \textit{Target} accuracy and Avg.$\Delta$ are progressively reduced, indicating stronger forgetting and less knowledge re-emergence. Meanwhile, the overall accuracy stays relatively stable across a wide range of $\lambda$, whereas the Avg. Score drops when $\lambda$ becomes too large, suggesting a loss of model fidelity. These results show that $\lambda$ governs the balance between forgetting effectiveness and performance preservation. We therefore use $\lambda=0.7$ in our experiments.

\vspace{-1em}
\begin{figure}[htbp]
    \centering
    \subfloat[Forgetting effectiveness]{
    \includegraphics[width=0.45\columnwidth]{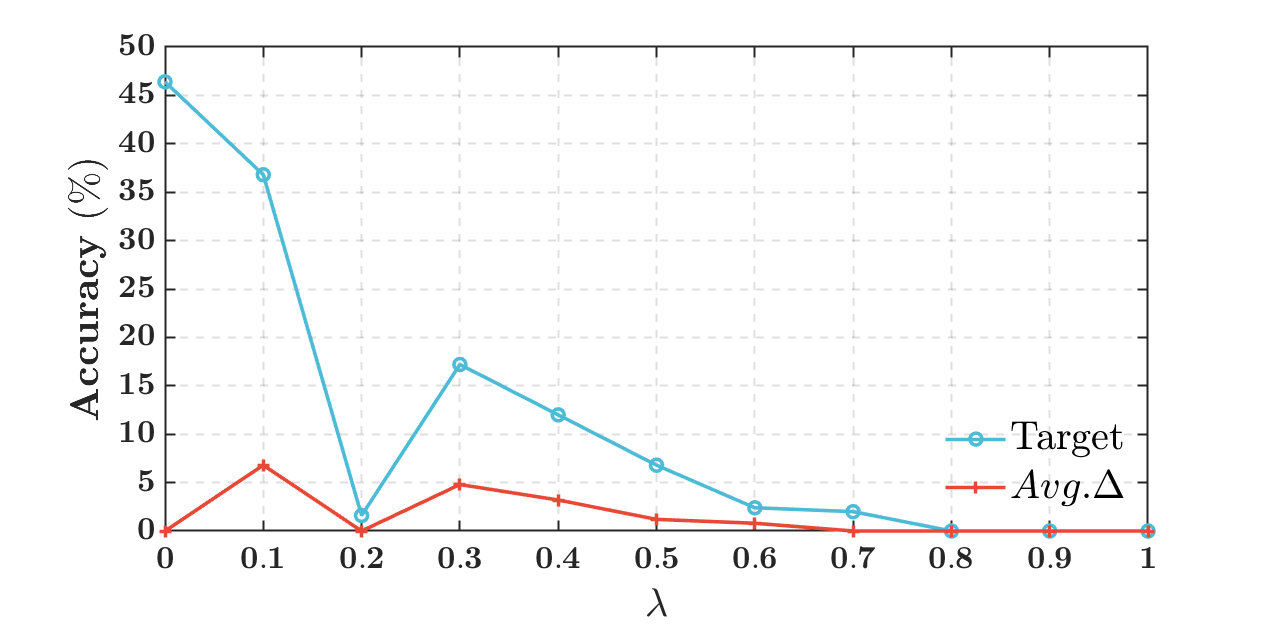}}
    \quad
        \subfloat[Model fidelity]{
    \includegraphics[width=0.45\columnwidth]{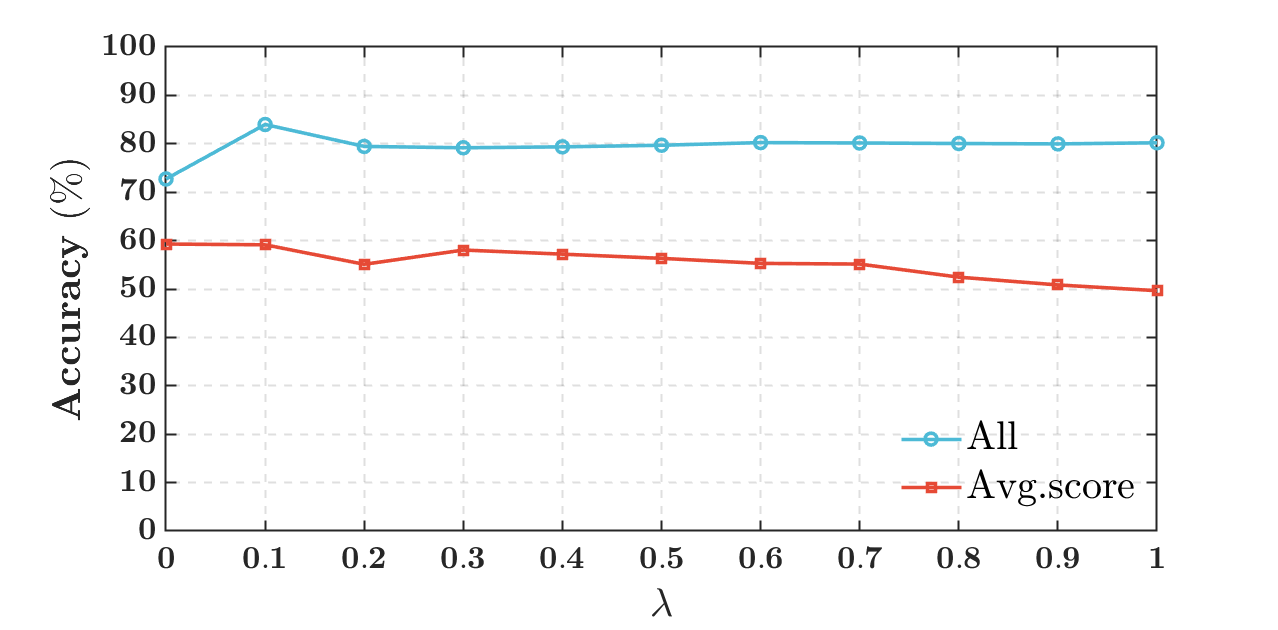}
    }
    \caption{Ablation study of the scaling factor $\lambda$ on ImageNet-1K in continual unlearning.}
    \label{fig:lambda}
\end{figure}

\partitle{Impact of the top-$k\%$ selection} 
Fig. \ref{fig:top-k}  shows the effect of the top-$k\%$ selection in task vector trimming. When $k$ is too small, the task vector becomes overly sparse, leading to insufficient unlearning and higher target accuracy. As $k$ increases, both \textit{Target} and Avg.$\Delta$ quickly decrease and remain close to zero, while \textit{All} and Avg. Score stay relatively stable. These results indicate that effective continual unlearning can already be achieved with a relatively small top-$k\%$, implying that the task vectors are highly sparse and incur very low storage overhead in practice. Based on this observation, we set $k=0.3$ in our evaluation experiments.

\vspace{-1em}
\begin{figure}[htbp]
    \centering
    \subfloat[Forgetting effectiveness]{
    \includegraphics[width=0.45\columnwidth]{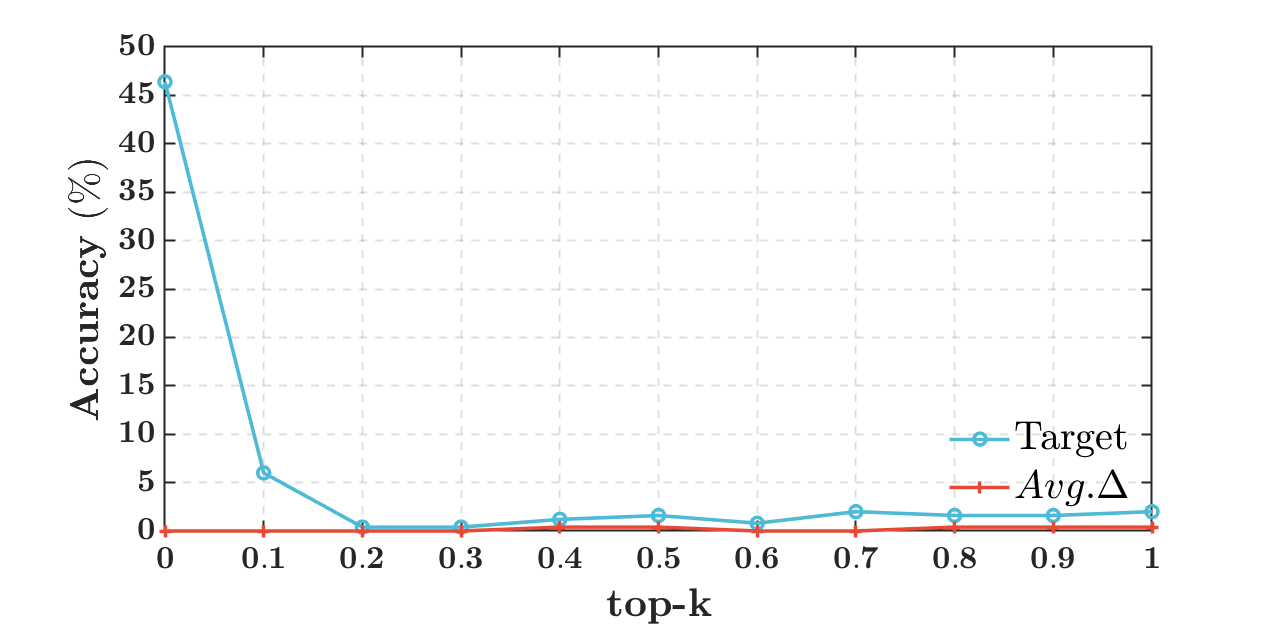}
    }
    \quad
    \subfloat[Model fidelity]{
    \includegraphics[width=0.45\columnwidth]{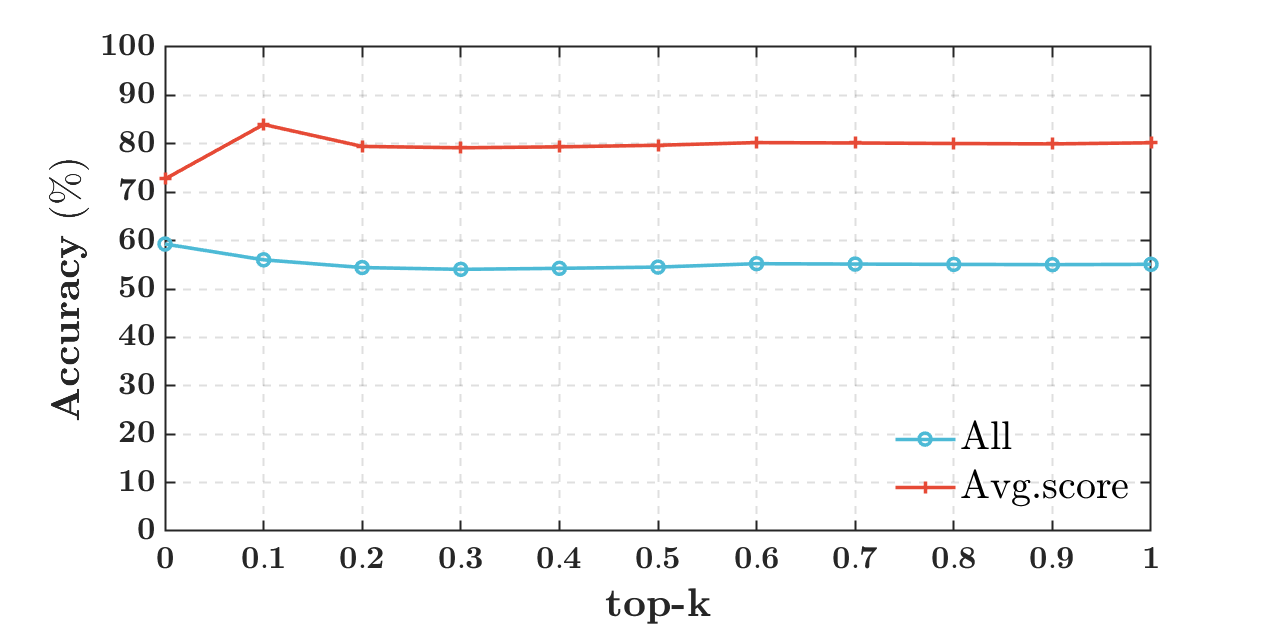}
    }
    \caption{Ablation study of the top-$k\%$ selection on ImageNet-1K in continual unlearning.}
    \label{fig:top-k}
    \vspace{-1em}
\end{figure}

\section{Conclusion}
\label{sec:conclusion}

In this paper, we study continual machine unlearning for vision-language models, where sequential forget requests introduce unique challenges in effectiveness, fidelity, and persistence. Unlike single-shot unlearning, this setting requires the model to remove newly specified target knowledge while preserving retained utility and preventing previously forgotten knowledge from re-emerging in later steps. To address these challenges, we propose \projectname, a framework based on Conflict-averse Task Arithmetic. By representing each forget set as an unlearning task vector and performing conflict-aware aggregation over historical vectors, \projectname{} suppresses inconsistent update directions and produces stable unlearning updates. Experiments on CLIP-based models demonstrate that \projectname{} achieves a strong balance among forgetting effectiveness, model fidelity, and persistence against knowledge re-emergence. These results highlight the importance of explicitly resolving inter-step conflicts for reliable continual unlearning in vision-language models.

\bibliographystyle{IEEEtran}
\bibliography{neurips_2026}

\newpage
\appendix
\section{Theoretical Analysis}
\label{sec:theory}
As a simple analysis, we formulate the regret bound of the proposed CATA in the convex setting. Firstly, we introduce convexity, smoothness, and Lipschitz assumptions. 

\begin{assumption}[Convexity]
Each \(\ell_t: \mathbb{R}^d \to \mathbb{R}\) is convex.
\end{assumption}

\begin{assumption}[\(\mu\)-smoothness]
Each \(\ell_t\) is \(\mu\)-smooth, \textit{i.e.}, its gradient is \(\mu\)-Lipschitz:
\[
\|\nabla \ell_t(\theta) - \nabla \ell_t(\theta')\| \le \mu \|\theta - \theta'\|, \qquad \forall \theta, \theta'.
\]
\end{assumption}

\begin{assumption}[\(L\)-Lipschitzness]
Each \(\ell_t\) is \(L\)-Lipschitz, i.e.,
\[
|\ell_t(\theta) - \ell_t(\theta')| \le L \|\theta - \theta'\|, \qquad \forall \theta, \theta'.
\]
Equivalently, \(\|\nabla \ell_t(\theta)\| \le L\) whenever differentiable.
\end{assumption}

Then, we assume that the task vector is uniformly similar to the genuine gradient.
\begin{assumption}[Gradient approximation error]
For each step \(t\), the sparse task vector satisfies
\[
\|\hat{\tau}^{(t)} + \nabla \ell_t(\theta^{(0)})\| \le \delta,
\]
where \(\delta \ge 0\) is a constant that depends on the fine-tuning procedure, sparsification ratio, and the Lipschitz constant of the loss.
\end{assumption}

Moreover, we upper bound the accumulated discrepancy between $G_{t}$ and the genuine gradient. 
\begin{lemma}[Cumulative gradient approximation error]
Let \(G_t = \sum_{j=1}^t \hat{\tau}^{(j)}\) and let \(\nabla \ell_j(\theta_j)\) be the true gradient at the point \(\theta_j\) chosen by CATA. Then
\[
\biggl\| G_t - \sum_{j=1}^t \nabla \ell_j(\theta_j) \biggr\| \le \mu R t + t\delta.
\]
\end{lemma}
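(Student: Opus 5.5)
The plan is a term-by-term triangle-inequality argument: split each summand into an \emph{approximation} part, controlled by the gradient approximation error assumption, and a \emph{drift} part, controlled by $\mu$-smoothness. First I fix conventions. The gradient approximation assumption is written as $\|\hat{\tau}^{(t)} + \nabla \ell_t(\theta^{(0)})\| \le \delta$, so a task vector approximates the \emph{negative} gradient. I will therefore read the lemma with the descent convention $G_t$ paired against $-\sum_j \nabla\ell_j(\theta_j)$. Equivalently, the quantity bounded is $\bigl\|\sum_{j=1}^t \hat{\tau}^{(j)} + \sum_{j=1}^t \nabla \ell_j(\theta_j)\bigr\|$. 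This is the only reading consistent with the assumption, and I will state it explicitly. I also take $R$ to be a uniform bound on how far CATA's iterates move from the pretrained model, $\|\theta_j - \theta^{(0)}\| \le R$ for all $j$. Since $\theta_j = \theta^{(0)} + \lambda \tau^{(j)}_{\mathrm{agg}}$ and every aggregated coordinate is an average of sparse task-vector entries, such an $R$ exists, for instance $R = \lambda \max_j \|\hat{\tau}^{(j)}\|_\infty \sqrt{d}$, or simply the diameter of the feasible set.

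Next, I apply the triangle inequality to move the norm inside the sum:
\[
\Bigl\| \sum_{j=1}^t \bigl(\hat{\tau}^{(j)} + \nabla \ell_j(\theta_j)\bigr) \Bigr\| \le \sum_{j=1}^t \bigl\| \hat{\tau}^{(j)} + \nabla \ell_j(\theta_j) \bigr\| .
\]
For each $j$, I add and subtract $\nabla \ell_j(\theta^{(0)})$ and use the triangle inequality again:
\[
\bigl\| \hat{\tau}^{(j)} + \nabla \ell_j(\theta_j) \bigr\| \le \bigl\| \hat{\tau}^{(j)} + \nabla \ell_j(\theta^{(0)}) \bigr\| + \bigl\| \nabla \ell_j(\theta_j) - \nabla \ell_j(\theta^{(0)}) \bigr\| .
\]
The first term is at most $\delta$ by the gradient approximation error assumption. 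By $\mu$-smoothness, the second term is at most $\mu \|\theta_j - \theta^{(0)}\| \le \mu R$. Summing over $j = 1, \dots, t$ gives the bound $\mu R t + t\delta$.

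The computation itself is routine. The main obstacle is justifying the two modelling choices above. The first is the sign convention relating $G_t$ to the gradients; the lemma as literally typed would not follow from the assumption without it. The second is that $R$ really bounds $\|\theta_j - \theta^{(0)}\|$ for CATA's iterates, rather than, say, the distance to a comparator. I would address the second point by showing that $\|\tau^{(t)}_{\mathrm{agg}}\|_\infty \le \max_{j \le t} \|\hat{\tau}^{(j)}\|_\infty$, which holds because each coordinate is an average over the set $A_i^{(t)}$. This gives a bound uniform in $t$. Alternatively, I would assume a bounded domain of radius $R$ as is standard in online convex optimization regret analyses.
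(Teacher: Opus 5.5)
Your proposal is correct and is essentially the paper's proof: you add and subtract $\nabla \ell_j(\theta^{(0)})$ in each summand, bound one piece by $\delta$ via Assumption~4, and bound the other by $\mu R$ via smoothness, a step the paper leaves implicit. Your sign correction is warranted, because the paper's displayed triangle inequality is only valid when the left-hand side is read as $\bigl\| G_t + \sum_{j=1}^t \nabla \ell_j(\theta_j) \bigr\|$, and your explicit assumption $\|\theta_j - \theta^{(0)}\| \le R$ is exactly what the paper's omitted second bound requires.
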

\begin{proof}
By the triangle inequality,
\[
\biggl\| \sum_{j=1}^t \hat{\tau}^{(j)} - \sum_{j=1}^t \nabla \ell_j(\theta_j) \biggr\|
\le \sum_{j=1}^t \|\hat{\tau}^{(j)} + \nabla \ell_j(\theta^{(0)})\| + \sum_{j=1}^t \|\nabla \ell_j(\theta^{(0)}) - \nabla \ell_j(\theta_j)\|.
\]
The first sum is bounded by \(t\delta\) by Assumption 4. \end{proof}

Finally, we derive the Regret upper bound of CATA by 1) upper-bounding the accumulated discrepancy between $G_{t}$ and the genuine gradient (Lemma 1), 2) upper-bounding the regret bound induced by the genuine gradient.  
\begin{theorem}[Regret bound of CATA]
Under Assumptions 1–4, if the step-size is chosen as \(\eta = \frac{R}{(L+\delta)\sqrt{T}}\), then the cumulative regret of CATA satisfies
\[
\mathrm{Regret}_T = \sum_{t=1}^T \ell_t(\theta_t) - \min_{\theta} \sum_{t=1}^T \ell_t(\theta) \le R(L+\delta)\sqrt{T} + 2\mu R^2 T + 2R\delta T.
\]
\end{theorem}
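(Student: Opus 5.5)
The plan is to view CATA as an inexact online (projected) gradient descent over a ball of radius $R$ containing all iterates and the comparator. The ball is implicit in the statement, and I will make it explicit: $\theta_t$ and $\theta^\star=\arg\min_\theta\sum_t\ell_t(\theta)$ lie in a set of diameter at most $R$ around $\theta^{(0)}$. The update direction actually used is $-\hat{\tau}^{(t)}$, so I write
\[
g_t := -\hat{\tau}^{(t)} = \nabla \ell_t(\theta_t) + e_t,
\qquad e_t := \bigl(-\hat{\tau}^{(t)} - \nabla\ell_t(\theta^{(0)})\bigr) + \bigl(\nabla\ell_t(\theta^{(0)}) - \nabla\ell_t(\theta_t)\bigr).
\]
This is exactly the per-step decomposition behind Lemma 1. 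Assumption 4 bounds the first bracket by $\delta$. Assumption 2 with $\|\theta_t-\theta^{(0)}\|\le R$ bounds the second by $\mu R$. Hence $\|e_t\|\le \mu R+\delta$, and summing recovers Lemma 1. I also get $\|g_t\|\le L+\delta+\mu R$; for the step-size term I only want $\|g_t\|\le L+\delta$, which I will justify below.

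The first step is linearization. By convexity (Assumption 1),
\[
\ell_t(\theta_t)-\ell_t(\theta^\star)\le \langle \nabla\ell_t(\theta_t),\theta_t-\theta^\star\rangle = \langle g_t,\theta_t-\theta^\star\rangle - \langle e_t,\theta_t-\theta^\star\rangle .
\]
The second step is to run the standard OGD telescoping on $\langle g_t,\theta_t-\theta^\star\rangle$ with step size $\eta$, using nonexpansiveness of projection onto the ball. This gives
\[
\sum_t\langle g_t,\theta_t-\theta^\star\rangle\le \frac{R^2}{2\eta}+\frac{\eta}{2}\sum_t\|g_t\|^2 .
\]
If I use $\|g_t\|\le L+\delta$, the choice $\eta=R/((L+\delta)\sqrt T)$ turns this into $R(L+\delta)\sqrt T$. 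The third step handles the error term by Cauchy–Schwarz:
\[
\sum_t |\langle e_t,\theta_t-\theta^\star\rangle|\le \sum_t \|e_t\|\,\|\theta_t-\theta^\star\|\le 2R\sum_t\|e_t\|\le 2R(\mu R T+\delta T)=2\mu R^2T+2R\delta T,
\]
where $\|\theta_t-\theta^\star\|\le 2R$ since both points lie within $R$ of $\theta^{(0)}$. Adding the three pieces gives the claimed bound.

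The main obstacle is the bookkeeping that makes CATA literally an OGD iterate. The algorithm outputs $\theta^{(0)}+\lambda\tau_{\mathrm{agg}}^{(t)}$, built from sign-filtered averages, not from $\theta_{t-1}-\eta g_t$. I would handle this by absorbing the discrepancy into the slack that is already linear in $T$. I would define the reference sequence $\tilde\theta_{t+1}=\Pi(\tilde\theta_t-\eta g_t)$ and note that it is governed by the cumulative vector $G_t$ through Lemma 1. Because the sign filter only removes components and averages the rest, the CATA point stays in the same radius-$R$ ball. The difference between CATA and the reference sequence then enters only through $\nabla\ell_t(\theta_t)$, which the $\mu R$ term controls via smoothness. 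This is the point where the argument is weakest and mostly rests on the radius-$R$ convention.

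For $\|g_t\|\le L+\delta$, I would use $\|g_t\|\le\|\nabla\ell_t(\theta^{(0)})\|+\delta\le L+\delta$, taken directly from Assumptions 3 and 4 at $\theta^{(0)}$. The smoothness mismatch is charged entirely to $e_t$ in the third step, which keeps the $\sqrt T$ term clean.
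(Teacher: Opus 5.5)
Your core argument is exactly the paper's proof: linearize by convexity, split $\nabla\ell_t(\theta_t)$ into a task-vector surrogate plus an error, run the OGD telescoping with $\|g_t\|\le L+\delta$ and the given $\eta$, and bound the error term by Cauchy--Schwarz using $\|\theta_t-\theta^\star\|\le 2R$ and $\|e_t\|\le \mu R+\delta$.

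Your sign convention $g_t=-\hat\tau^{(t)}$ is in fact the consistent one. The paper splits with $+\hat\tau^{(t)}$ and analyzes the update $\theta_t=\theta^{(0)}-\eta G_{t-1}$. It then bounds $\|\nabla\ell_t(\theta_t)-\hat\tau^{(t)}\|$ through $\|\nabla\ell_t(\theta^{(0)})+\hat\tau^{(t)}\|$, which the triangle inequality does not give. Under Assumption 4 the correct term $\|\nabla\ell_t(\theta^{(0)})-\hat\tau^{(t)}\|$ lies within $\delta$ of $2\|\nabla\ell_t(\theta^{(0)})\|$, not $\delta$. Descending along $+\hat\tau^{(t)}$, as you do, matches Assumption 4 and also the sign of CATA's update $\theta^{(0)}+\lambda\tau_{\mathrm{agg}}^{(t)}$. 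Your explicit radius-$R$ ball, with projection and containing $\theta^\star$, supplies the facts $\|\theta_t-\theta^{(0)}\|\le R$ and $\|\theta_t-\theta^\star\|\le 2R$ that the paper uses without stating. One small inconsistency: you say ``diameter $R$'' but then use radius $R$, and radius $R$ is what the bound needs.

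The step that fails is your reference-sequence patch for the mismatch between CATA and OGD. The telescoping inequality controls $\sum_t\langle g_t,\tilde\theta_t-\theta^\star\rangle$ at the reference iterates $\tilde\theta_t$. If the regret is evaluated at the CATA points, you must additionally pay $\sum_t\langle g_t,\theta_t-\tilde\theta_t\rangle$, or $\sum_t L\|\theta_t-\tilde\theta_t\|$ if you compare losses directly. That is a first-order cost in the \emph{position} mismatch, bounded in general only by $\|g_t\|\,\|\theta_t-\tilde\theta_t\|\le 2R(L+\delta)$ per round. Smoothness cannot absorb it, because the $\mu R$ term only pays for evaluating the gradient at $\theta_t$ instead of $\theta^{(0)}$. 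You would therefore pick up an extra $2R(L+\delta)T$ that is not in the statement. Your claim that the sign-filtered average stays in the radius-$R$ ball is also unsupported, since $\lambda\tau_{\mathrm{agg}}^{(t)}$ has a fixed scale $\lambda$ unrelated to $R$ or $\eta$.

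The paper never attempts this reduction. It simply takes the iterate to be $\theta_t=\theta^{(0)}-\eta G_{t-1}$, i.e.\ it analyzes OGD on the summed sparse task vectors and ignores the sign filter, the averaging over $A_i^{(t)}$, and $\lambda$. Drop the patch and state that identification as an explicit modeling assumption, with your corrected sign and with the projection if you keep it. With that, your proof is complete and coincides with the paper's.
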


\begin{proof}
We decompose the regret into three parts:
\[
\mathrm{Regret}_T = \underbrace{\sum_{t=1}^T \bigl(\ell_t(\theta_t) - \ell_t(\theta^*)\bigr)}_{\text{true regret}}.
\]

By convexity (Assumption 1),
\[
\ell_t(\theta_t) - \ell_t(\theta^*) \le \langle \nabla \ell_t(\theta_t), \theta_t - \theta^* \rangle.
\]

Besides, 
\[
\langle \nabla \ell_t(\theta_t), \theta_t - \theta^* \rangle = \langle \hat{\tau}^{(t)}, \theta_t - \theta^* \rangle + \langle \nabla \ell_t(\theta_t) - \hat{\tau}^{(t)}, \theta_t - \theta^* \rangle.
\]
The first term is the linear regret with respect to the approximate gradients \(\hat{\tau}^{(t)}\). For the update \(\theta_t = \theta^{(0)} - \eta G_{t-1}\) with \(G_{t-1}=\sum_{j=1}^{t-1}\hat{\tau}^{(j)}\), standard analysis gives
\[
\sum_{t=1}^T \langle \hat{\tau}^{(t)}, \theta_t - \theta^* \rangle \le \frac{\|\theta^* - \theta^{(0)}\|^2}{2\eta} + \frac{\eta}{2} \sum_{t=1}^T \|\hat{\tau}^{(t)}\|^2.
\]
Using \(\|\theta^* - \theta^{(0)}\|\le R\) and \(\|\hat{\tau}^{(t)}\| \le L+\delta\) (by Assumptions 3 and 4), and choosing \(\eta = R/((L+\delta)\sqrt{T})\), we obtain
\[
\sum_{t=1}^T \langle \hat{\tau}^{(t)}, \theta_t - \theta^* \rangle \le \frac{R^2}{2\eta} + \frac{\eta}{2} T (L+\delta)^2 = R (L+\delta) \sqrt{T}.
\]

Using Cauchy–Schwarz and the bound \(\|\theta_t - \theta^*\| \le 2R\),
\[
\sum_{t=1}^T \langle \nabla \ell_t(\theta_t) - \hat{\tau}^{(t)}, \theta_t - \theta^* \rangle
\le \sum_{t=1}^T \|\nabla \ell_t(\theta_t) - \hat{\tau}^{(t)}\| \cdot \|\theta_t - \theta^*\|
\le 2R \sum_{t=1}^T \|\nabla \ell_t(\theta_t) - \hat{\tau}^{(t)}\|.
\]
Now,
\[
\|\nabla \ell_t(\theta_t) - \hat{\tau}^{(t)}\|
\le \|\nabla \ell_t(\theta_t) - \nabla \ell_t(\theta^{(0)})\| + \|\nabla \ell_t(\theta^{(0)}) + \hat{\tau}^{(t)}\|
\le \mu R + \delta.
\]
Hence
\[
\sum_{t=1}^T \langle \nabla \ell_t(\theta_t) - \hat{\tau}^{(t)}, \theta_t - \theta^* \rangle \le 2R (\mu R + \delta) T.
\]

Putting everything above together,
\[
\mathrm{Regret}_T \le R(L+\delta)\sqrt{T} + 2R\mu R T + 2R\delta T = R(L+\delta)\sqrt{T} + 2\mu R^2 T + 2R\delta T.
\]
\end{proof}
\section{Additional Implementation Details}
\label{sec:supp_baseline_details}

All experiments are conducted on a single NVIDIA RTX 4090 GPU with 48GB memory, using Python~3.11.14 and PyTorch~2.9.1. All CLIP backbones are initialized from the official OpenAI released pretrained weights\footnote{\url{https://github.com/openai/CLIP}}. In the main continual unlearning experiments on ImageNet-1K, we sequentially remove five target classes: \textit{tailed frog}, \textit{diamondback}, \textit{barn spider}, \textit{briard}, and \textit{dungeness crab}. For the implementation details of baselines, we follow the official implementations or commonly adopted settings from prior work as described below.

\begin{itemize}
    \item \textbf{FT}~\cite{warnecke2023machine} updates the image encoder using only the retain set $\mathcal{D}_r$, while keeping the text encoder frozen, and optimizes the standard CLIP contrastive loss. For ImageNet, we subsample approximately 50k retain samples per step to reduce computation. We train for 2 epochs per step using Adam with a learning rate of $1\times10^{-6}$ and a batch size of 128.

    \item \textbf{GA}~\cite{thudi2022unrolling} performs gradient ascent on the current forget set $\mathcal{D}_u^{(t)}$ to maximize the CLIP loss and disrupt image--text alignment for target classes. Only the image encoder is updated, with gradient norm clipping applied for stability. We use Adam with a learning rate of $1\times10^{-6}$, a batch size of 128, and train for 2 epochs per step with gradient clipping set to 1.0.

    \item \textbf{Fisher}~\cite{golatkar2020eternal} estimates the Fisher Information Matrix on the forgetting dataset and perturbs the model parameters by adding Gaussian noise whose variance is inversely proportional to the Fisher information, thereby reducing the model's sensitivity to the target data. In the continual setting, we apply this procedure sequentially at each step based on $\mathcal{D}_u^{(t)}$.

    \item \textbf{LIP}~\cite{foster2024information} freezes the image encoder and updates the text projection matrix via a LoRA adapter, optimizing a combination of forget loss, retain loss, and a regularization term. To reduce computation on ImageNet, up to 512 retain classes are sampled per step. After optimization, the LoRA weights are merged back into the model. We use Adam with a learning rate of 0.01, LoRA rank 5, and run 2,000 iterations per step with $\lambda_1=0.3$, dynamically adjusted $\lambda_2$, and $\lambda_3=1.0$.

    \item \textbf{EMMN}~\cite{chundawat2023zero} adopts a teacher--student framework with a learnable pseudo-image generator. The teacher model is fixed, while the student is updated using pseudo-images filtered by confidence, and training alternates between generator and student updates in a 1:5 ratio. We use Adam for both components, with a student learning rate of $1\times10^{-6}$, a generator learning rate of $1\times10^{-5}$, batch size 128, temperature $T=1.0$, and confidence threshold $\delta=0.5$.

    \item \textbf{CLIP-LIP}~\cite{kravets2025zero} extends LIP to the CLIP framework by applying low-rank adaptation to the text projection while preserving the pretrained image encoder. Similar to LIP, it updates the text projection using a low-rank LoRA module to suppress representations of the forgetting classes while maintaining alignment for retained classes. In the continual setting, we apply CLIP-LIP sequentially, where each step updates the model using the cumulative forgetting set. We follow the original implementation and use Adam with a learning rate of 0.01, LoRA rank 5, and 2,000 iterations per step.

    \item \textbf{TIFS}~\cite{cai2025targeted} performs targeted forgetting by suppressing the model's responses on the forgetting dataset while preserving performance on retained data. Specifically, it minimizes prediction confidence on target classes while maintaining alignment on non-target samples. In the continual setting, we apply TIFS sequentially, where at step $t$ the model is updated using the current forgetting set $\mathcal{D}_u^{(t)}$ together with retained data. We follow the original implementation and update only the image encoder using Adam with a learning rate of $1\times10^{-6}$, batch size 128, and train for 2 epochs per step.
\end{itemize}

\section{Additional Experimental Results}

\partitle{Additional comparison experiments in single-shot unlearning}
Table~\ref{tab:cifar10} reports additional single-shot unlearning results on CIFAR-10. Overall, \projectname{} achieves the best \textit{Avg. Score} on both RN50 and RN101, showing strong utility preservation after unlearning. Compared with FT, GA, Fisher, and LIP, which substantially degrade retained and transfer performance, \projectname{} maintains much higher accuracy on retained classes and downstream datasets. Although CLIP-LIP and TIFS preserve model utility in some settings, they often retain higher target accuracy or show weaker overall balance. These results further confirm that \projectname{} achieves effective target removal while preserving model fidelity in single-shot unlearning. These observations are consistent with the main experimental results on ImageNet-1K in Table~\ref{tab:single-shot imagenet}, further validating the robustness of \projectname{} across different datasets and backbones.

\begin{table*}[htbp]
\centering
\caption{Performance comparison on CIFAR-10 in single-shot unlearning.}
\resizebox{\textwidth}{!}{
\begin{tabular}{cccccccccccccc}
\toprule
\multirow{2}{*}{Backbone} & \multirow{2}{*}{Method} & \multicolumn{3}{c}{CIFAR-10} & \multirow{2}{*}{Food$\uparrow$} & \multirow{2}{*}{STL$\uparrow$} & \multirow{2}{*}{ObjectNet$\uparrow$} & \multicolumn{2}{c}{ImageNet} & \multirow{2}{*}{Avg. Score$\uparrow$} \\ \cmidrule{3-5} \cmidrule{9-10}
& & Target$\downarrow$ & Retain$\uparrow$ & All$\uparrow$ & & & &  Target$\downarrow$ & All$\uparrow$  \\ 

\midrule

\multirow{9}{*}{RN50}
& Original & 54.10& 66.61 & 65.37 & 76.49 & 93.75 & 25.83 & 71.33 & 53.81 & -- \\  \cmidrule{2-11}
& FT  \cite{warnecke2023machine} & $22.40_{41.40}$  & $63.29_{95.02}$ & $58.20_{89.03}$ & $1.70_{2.22}$ & $45.20_{48.21}$ & $0.29_{1.12}$ & $0.10_{0.14}$ & $0.00_{0.00}$ & 36.79 \\
& GA \cite{thudi2022unrolling}  & $5.90_{10.91}$  & $27.01_{40.55}$ & $24.90_{38.09}$ & $1.82_{2.38}$ & $34.16_{36.44}$ & $0.88_{3.41}$ & $0.00_{0.00}$ & $0.40_{0.74}$ & 26.34 \\
& Fisher \cite{golatkar2020eternal} & $0.00_{0.00}$  & $12.39_{18.60}$ & $12.16_{18.60}$ & $1.22_{1.60}$ & $15.47_{16.50}$ & $0.10_{0.40}$ & $0.00_{0.00}$ & $0.11_{0.20}$ & 19.49 \\
& LIP \cite{foster2024information}  & $0.00_{0.00}$  & $14.85_{22.30}$ & $14.58_{22.30}$ & $1.61_{2.10}$ & $15.19_{16.20}$ & $0.08_{0.30}$ & $0.14_{0.20}$ & $0.16_{0.30}$ & 20.46 \\
& EMMN \cite{chundawat2023zero}  & $0.30_{0.55}$  & $11.34_{17.02}$ & $10.24_{15.66}$ & $49.70_{64.98}$ & $46.86_{49.98}$ & $10.21_{39.53}$ & $48.00_{67.29}$ & $36.32_{67.50}$ & 52.68 \\
& CLIP-LIP \cite{kravets2025zero} & $67.80_{100.00}$  & $68.44_{100.00}$ & $68.34_{100.00}$ & $76.78_{100.00}$ & $93.66_{99.90}$ & $25.91_{100.00}$ & $57.33_{80.37}$ & $53.85_{100.00}$ & 81.87 \\
& TIFS \cite{zhang2025targeted}    & $5.03_{9.30}$  & $61.61_{92.50}$ & $60.47_{92.50}$ & $65.17_{85.20}$ & $87.38_{93.20}$ & $21.70_{84.00}$ & $66.27_{92.90}$ & $45.09_{83.80}$ & 89.35 \\ \cmidrule{2-11}
& \projectname \ (ours) & $3.30_{6.10}$  & $69.46_{100.00}$ & $62.84_{96.13}$ & $74.43_{97.31}$ & $88.84_{94.76}$ & $24.51_{94.89}$ & $31.55_{44.24}$ & $50.25_{93.38}$ & $\textbf{96.52}$ \\

\midrule
\multirow{8}{*}{RN101}
& Original & 70.60 & 75.39 & 74.91 & 81.10 & 96.45 & 29.16 & 66.67 & 55.25 & --\\ \cmidrule{2-11}
& FT  \cite{warnecke2023machine}  
& $14.90_{21.10}$  & $63.84_{84.68}$ & $57.89_{77.28}$ & $1.70_{2.10}$ & $66.90_{69.36}$ & $1.87_{6.41}$ & $0.00_{0.00}$ & $1.29_{2.33}$ & 40.13 \\

& GA  \cite{thudi2022unrolling}      
& $4.50_{6.37}$  & $25.86_{34.30}$ & $23.72_{31.66}$ & $3.47_{4.28}$ & $56.33_{58.40}$ & $1.97_{6.76}$ & $0.00_{0.00}$ & $1.64_{2.97}$ & 29.00 \\

& Fisher \cite{golatkar2020eternal}    
& $0.00_{0.00}$  & $15.08_{20.00}$ & $14.98_{20.00}$ & $1.38_{1.70}$ & $17.65_{18.30}$ & $0.12_{0.40}$ & $0.13_{0.20}$ & $0.17_{0.30}$ & 20.11 \\

& LIP  \cite{foster2024information}     
& $0.00_{0.00}$  & $16.13_{21.40}$ & $16.03_{21.40}$ & $2.27_{2.80}$ & $16.40_{17.00}$ & $0.09_{0.30}$ & $0.00_{0.00}$ & $0.28_{0.50}$ & 20.43 \\

& EMMN \cite{chundawat2023zero}    
& $78.30_{100.00}$  & $6.47_{8.58}$ & $13.65_{18.22}$ & $45.29_{55.84}$ & $42.85_{44.43}$ & $10.71_{36.73}$ & $62.67_{94.00}$ & $36.54_{66.14}$ & 39.13 \\

& CLIP-LIP \cite{kravets2025zero} 
& $3.60_{5.10}$  & $75.67_{100.00}$ & $68.50_{91.44}$ & $79.11_{97.55}$ & $90.88_{94.22}$ & $26.83_{92.01}$ & $21.33_{31.99}$ & $55.25_{100.00}$ & 87.76 \\
 
& TIFS \cite{zhang2025targeted} 
& $8.97_{12.70}$  & $73.96_{98.10}$ & $73.49_{98.10}$ & $69.10_{85.20}$ & $91.53_{94.90}$ & $24.49_{84.00}$ & $58.74_{88.10}$ & $36.52_{66.10}$ & 87.73 \\  \cmidrule{2-11}

& \projectname \ (ours)       
& $3.05_{4.31}$  & $69.41_{92.07}$ & $70.77_{94.47}$ & $78.97_{97.37}$ & $89.71_{93.01}$ & $28.27_{96.95}$ & $27.33_{40.99}$ & $51.02_{92.34}$ & $\textbf{87.86}$ \\

\bottomrule
\end{tabular}}

\label{tab:cifar10}
\end{table*}

\partitle{Additional comparison experiments in continual unlearning}
Table~\ref{tab:supp_stepwise_cifar100_all} reports additional step-wise continual unlearning results on CIFAR-100. The observations are consistent with the main results on ImageNet-1K. FT generally preserves high retain and downstream performance but fails to effectively forget the target classes. GA achieves stronger target suppression than FT, but its retained and overall performance degrades as the number of unlearning steps increases. LIP aggressively suppresses many target classes, but severely damages model utility across retained and transfer datasets. In contrast, \projectname{} consistently reduces the accuracy of the forgotten classes while maintaining stronger downstream performance, demonstrating a better balance among effectiveness, fidelity, and irreversibility across different datasets and backbones.

\begin{table*}[htbp]
\centering
\caption{Performance comparison on CIFAR-100 in continual unlearning. Underlined values indicate the class forgotten at the corresponding step.}
\label{tab:supp_stepwise_cifar100_all}
\scriptsize
\setlength{\tabcolsep}{3.2pt}
\resizebox{\textwidth}{!}{
\begin{tabular}{ccc|ccccc|cccccccc}
\toprule[1pt]
\multirow{2}{*}{Backbone} 
& \multirow{2}{*}{Method} 
& \multirow{2}{*}{Step} 
& \multicolumn{5}{c|}{Target$\downarrow$} 
& \multirow{2}{*}{Retain$\uparrow$} 
& \multirow{2}{*}{All$\uparrow$} 
& \multirow{2}{*}{Food$\uparrow$} 
& \multirow{2}{*}{STL$\uparrow$} 
& \multirow{2}{*}{ObjectNet$\uparrow$} 
& \multirow{2}{*}{CIFAR-10$\uparrow$} 
& \multirow{2}{*}{Avg. $\Delta\downarrow$} 
& \multirow{2}{*}{Avg. Score$\uparrow$} \\
& & 
& bed & crab & house & rose & wardro 
& & & & & & & & \\
\midrule[1pt]

\multirow{21}{*}{ViT-B/32}
& Original & Step 0 
& $66.00$ & $61.00$ & $72.00$ & $61.00$ & $80.00$ 
& $61.20$ & $61.54$ & $82.05$ & $97.36$ & $30.27$ & $88.79$ & -- & -- \\
\cmidrule{2-16}

& \multirow{5}{*}{GA \cite{thudi2022unrolling}}
& Step 1 & $\underline{11.00}$ & $38.00$ & $66.00$ & $56.00$ & $65.00$ & $57.77$ & $57.24$ & $81.79$ & $97.41$ & $29.85$ & $87.28$  & -- & -- \\
& & Step 2 & $14.00$ & $\underline{22.00}$ & $59.00$ & $48.00$ & $79.00$ & $53.15$ & $52.71$ & $81.90$ & $97.41$ & $29.85$ & $84.10$  & -- & -- \\
& & Step 3 & $15.00$ & $21.00$ & $\underline{40.00}$ & $44.00$ & $78.00$ & $52.13$ & $51.50$ & $81.66$ & $97.33$ & $29.66$ & $81.63$  & -- & -- \\
& & Step 4 & $8.00$ & $11.00$ & $23.00$ & $\underline{19.00}$ & $63.00$ & $39.12$ & $38.40$ & $81.03$ & $97.03$ & $29.23$ & $69.93$  & -- & -- \\
& & Step 5 & $15.00$ & $17.00$ & $38.00$ & $23.00$ & $\underline{52.00}$ & $43.58$ & $42.85$ & $81.41$ & $97.21$ & $29.52$ & $75.68$  & $3.00$ & $74.23$ \\
\cmidrule{2-16}

& \multirow{5}{*}{FT \cite{warnecke2023machine}}
& Step 1 & $\underline{70.00}$ & $74.00$ & $85.00$ & $90.00$ & $95.00$ & $83.17$ & $83.15$ & $79.32$ & $97.38$ & $29.53$ & $91.66$  & -- & -- \\
& & Step 2 & $65.00$ & $\underline{60.00}$ & $82.00$ & $89.00$ & $96.00$ & $84.62$ & $84.31$ & $77.40$ & $96.81$ & $28.81$ & $90.37$  & -- & -- \\
& & Step 3 & $66.00$ & $53.00$ & $\underline{79.00}$ & $91.00$ & $96.00$ & $85.27$ & $84.86$ & $76.35$ & $96.50$ & $28.33$ & $89.49$  & -- & -- \\
& & Step 4 & $58.00$ & $51.00$ & $77.00$ & $\underline{87.00}$ & $96.00$ & $84.93$ & $84.37$ & $75.60$ & $96.28$ & $27.81$ & $88.47$  & -- & -- \\
& & Step 5 & $51.00$ & $48.00$ & $73.00$ & $73.00$ & $\underline{96.00}$ & $84.98$ & $84.14$ & $75.14$ & $95.74$ & $27.49$ & $87.28$  & $10.20$ & $59.78$ \\
\cmidrule{2-16}

& \multirow{5}{*}{LIP \cite{foster2024information}}
& Step 1 & $\underline{0.00}$ & $0.00$ & $0.00$ & $0.00$ & $0.00$ & $2.46$ & $2.34$ & $10.50$ & $34.60$ & $4.51$ & $10.89$  & -- & -- \\
& & Step 2 & $69.00$ & $\underline{54.00}$ & $0.00$ & $0.00$ & $0.00$ & $0.22$ & $1.42$ & $1.06$ & $9.71$ & $0.33$ & $8.88$  & -- & -- \\
& & Step 3 & $25.00$ & $23.00$ & $\underline{13.00}$ & $0.00$ & $0.00$ & $0.63$ & $1.19$ & $1.10$ & $7.24$ & $0.19$ & $9.30$  & -- & -- \\
& & Step 4 & $52.00$ & $10.00$ & $10.00$ & $\underline{0.00}$ & $0.00$ & $0.51$ & $1.21$ & $1.06$ & $8.14$ & $0.11$ & $9.40$  & -- & -- \\
& & Step 5 & $42.00$ & $1.00$ & $3.00$ & $0.00$ & $\underline{9.00}$ & $0.61$ & $1.10$ & $1.14$ & $9.85$ & $0.10$ & $9.51$  & $21.00$ & $40.42$ \\ \cmidrule{2-16}

& \multirow{5}{*}{\projectname\ (ours)}
& Step 1 & $\underline{0.00}$ & $45.00$ & $68.00$ & $44.00$ & $62.00$ & $55.69$ & $55.10$ & $81.67$ & $97.38$ & $29.95$ & $85.15$  & -- & -- \\
& & Step 2 & $2.00$ & $\underline{0.00}$ & $64.00$ & $38.00$ & $76.00$ & $45.38$ & $44.91$ & $81.68$ & $97.06$ & $29.46$ & $75.19$  & -- & -- \\
& & Step 3 & $1.00$ & $2.00$ & $\underline{7.00}$ & $38.00$ & $53.00$ & $47.43$ & $46.07$ & $82.02$ & $97.12$ & $29.90$ & $77.30$  & -- & -- \\
& & Step 4 & $1.00$ & $2.00$ & $9.00$ & $\underline{3.00}$ & $44.00$ & $46.61$ & $44.87$ & $81.74$ & $97.09$ & $29.61$ & $76.31$  & -- & -- \\
& & Step 5 & $1.00$ & $5.00$ & $8.00$ & $7.00$ & $\underline{5.00}$ & $50.39$ & $48.13$ & $81.72$ & $97.16$ & $29.78$ & $79.16$  & $\textbf{2.20}$ & $\textbf{91.72}$ \\

\midrule[1pt]

\multirow{21}{*}{ViT-L/14}
& Original & Step 0 
& $90.00$ & $82.00$ & $71.00$ & $91.00$ & $62.00$ 
& $74.88$ & $75.10$ & $92.05$ & $99.41$ & $51.86$ & $95.32$  & -- & -- \\
\cmidrule{2-16}

& \multirow{5}{*}{GA \cite{thudi2022unrolling}}
& Step 1 & $\underline{26.00}$ & $64.00$ & $32.00$ & $78.00$ & $40.00$ & $61.21$ & $60.55$ & $91.90$ & $99.40$ & $51.61$ & $93.42$  & -- & -- \\
& & Step 2 & $10.00$ & $\underline{9.00}$ & $7.00$ & $10.00$ & $15.00$ & $19.29$ & $18.84$ & $91.72$ & $99.26$ & $51.36$ & $71.29$  & -- & -- \\
& & Step 3 & $15.00$ & $20.00$ & $\underline{8.00}$ & $16.00$ & $14.00$ & $24.77$ & $24.26$ & $91.71$ & $99.24$ & $51.41$ & $71.01$  & -- & -- \\
& & Step 4 & $14.00$ & $16.00$ & $4.00$ & $\underline{11.00}$ & $3.00$ & $10.69$ & $10.64$ & $91.68$ & $99.19$ & $51.24$ & $57.93$  & -- & -- \\
& & Step 5 & $13.00$ & $7.00$ & $0.00$ & $5.00$ & $\underline{2.00}$ & $6.01$ & $5.98$ & $91.58$ & $99.15$ & $51.33$ & $31.49$  & $5.80$ & $74.14$ \\
\cmidrule{2-16}

& \multirow{5}{*}{FT \cite{warnecke2023machine}}
& Step 1 & $\underline{87.00}$ & $88.00$ & $89.00$ & $93.00$ & $96.00$ & $89.55$ & $89.60$ & $90.99$ & $98.94$ & $50.93$ & $95.78$  & -- & -- \\
& & Step 2 & $69.00$ & $\underline{80.00}$ & $90.00$ & $94.00$ & $98.00$ & $90.12$ & $89.92$ & $90.19$ & $98.84$ & $49.82$ & $95.31$  & -- & -- \\
& & Step 3 & $74.00$ & $75.00$ & $\underline{89.00}$ & $91.00$ & $98.00$ & $90.73$ & $90.46$ & $89.53$ & $98.05$ & $48.82$ & $94.05$  & -- & -- \\
& & Step 4 & $71.00$ & $69.00$ & $87.00$ & $\underline{88.00}$ & $97.00$ & $91.18$ & $90.74$ & $89.53$ & $97.90$ & $47.95$ & $95.09$  & -- & -- \\
& & Step 5 & $80.00$ & $61.00$ & $89.00$ & $76.00$ & $\underline{97.00}$ & $91.38$ & $90.84$ & $89.18$ & $98.06$ & $47.93$ & $93.66$ & $7.60$ & $54.60$ \\
\cmidrule{2-16}

& \multirow{5}{*}{LIP \cite{foster2024information}}
& Step 1 & $\underline{0.00}$ & $37.00$ & $0.00$ & $0.00$ & $0.00$ & $8.89$ & $8.80$ & $39.62$ & $67.38$ & $21.74$ & $59.95$  & -- & -- \\
& & Step 2 & $24.00$ & $\underline{94.00}$ & $0.00$ & $0.00$ & $1.00$ & $0.03$ & $1.22$ & $1.01$ & $18.15$ & $3.96$ & $12.13$  & -- & -- \\
& & Step 3 & $13.00$ & $93.00$ & $\underline{2.00}$ & $0.00$ & $1.00$ & $0.02$ & $1.11$ & $0.79$ & $11.18$ & $3.07$ & $12.14$  & -- & -- \\
& & Step 4 & $11.00$ & $93.00$ & $2.00$ & $\underline{0.00}$ & $0.00$ & $0.02$ & $1.08$ & $0.82$ & $12.00$ & $3.13$ & $11.90$  & -- & -- \\
& & Step 5 & $10.00$ & $84.00$ & $2.00$ & $0.00$ & $\underline{8.00}$ & $0.01$ & $1.03$ & $0.82$ & $13.35$ & $3.31$ & $11.98$  & $4.00$ & $36.85$ \\
\cmidrule{2-16}

& \multirow{5}{*}{\projectname\ (ours)}
& Step 1 & $\underline{0.00}$ & $77.00$ & $61.00$ & $81.00$ & $23.00$ & $65.39$ & $64.39$ & $92.12$ & $99.40$ & $51.76$ & $94.12$  & -- & -- \\
& & Step 2 & $26.00$ & $\underline{0.00}$ & $44.00$ & $36.00$ & $40.00$ & $49.46$ & $48.45$ & $91.85$ & $99.40$ & $51.40$ & $84.64$  & -- & -- \\
& & Step 3 & $24.00$ & $2.00$ & $\underline{4.00}$ & $43.00$ & $28.00$ & $50.74$ & $49.21$ & $91.99$ & $99.38$ & $51.34$ & $87.38$  & -- & -- \\
& & Step 4 & $25.00$ & $2.00$ & $6.00$ & $\underline{2.00}$ & $28.00$ & $46.12$ & $44.44$ & $91.79$ & $99.36$ & $51.26$ & $86.42$  & -- & -- \\
& & Step 5 & $4.00$ & $4.00$ & $5.00$ & $3.00$ & $\underline{1.00}$ & $50.55$ & $50.41$ & $91.79$ & $99.33$ & $50.59$ & $88.41$  & $\textbf{2.00}$ & $\textbf{91.21}$ \\

\bottomrule[1pt]
\end{tabular}}
\end{table*}

\partitle{Detailed results for scalability evaluation}
Table~\ref{tab:vitB_in10_lam09} provides the detailed numerical results corresponding to the scalability evaluation in the main paper, where the results are visualized in figure form. We evaluate \projectname{} on ImageNet-1K with ViT-B/32 by increasing the number of unlearning steps to 10, with each step forgetting one target class. The underlined value indicates the class forgotten at the corresponding step.
The results show that \projectname{} remains effective as the unlearning sequence becomes longer. Once a class is forgotten, its target accuracy generally stays close to zero in subsequent steps, indicating that the proposed method can suppress knowledge re-emergence over longer sequences. Meanwhile, the retain and overall accuracy remain relatively stable across 10 steps, further supporting the scalability of \projectname{} for continual unlearning.

\begin{table*}[htbp]
\centering
\caption{Detailed numerical results for the scalability evaluation on ImageNet-1K. Each step forgets one target class, and underlined values indicate the class forgotten at the corresponding step.}
\resizebox{\textwidth}{!}{
\begin{tabular}{ccccccccccccc}
\toprule[1pt]
\multirow{2}{*}{} & \multicolumn{10}{c}{Target$\downarrow$} & \multirow{2}{*}{Retain$\uparrow$} & \multirow{2}{*}{All$\uparrow$} \\
& tailed frog & diamondback & barn spider & briard & dungeness crab & bib & loupe & radio telescope & dock & streetcar & & \\
\midrule[1pt]
Step 0 & $64.00$ & $24.00$ & $32.00$ & $22.00$ & $90.00$ & $46.00$ & $30.00$ & $50.00$ & $22.00$ & $52.00$ & $59.46$ & $59.29$ \\ \cmidrule{1-13}
Step 1 & \multicolumn{1}{c|}{$\underline{0.00}$} & $10.00$ & $14.00$ & $34.00$ & $66.00$ & $48.00$ & $34.00$ & $52.00$ & $24.00$ & $60.00$ & $53.94$ & $53.74$ \\ \cmidrule{3-3}
Step 2 & $0.00$ & \multicolumn{1}{c|}{$\underline{0.00}$} & $18.00$ & $38.00$ & $62.00$ & $44.00$ & $34.00$ & $48.00$ & $20.00$ & $58.00$ & $53.88$ & $53.67$ \\ \cmidrule{4-4}
Step 3 & $0.00$ & $0.00$ & \multicolumn{1}{c|}{$\underline{0.00}$} & $24.00$ & $50.00$ & $44.00$ & $28.00$ & $36.00$ & $22.00$ & $54.00$ & $53.78$ & $53.50$ \\ \cmidrule{5-5}
Step 4 & $0.00$ & $0.00$ & $0.00$ & \multicolumn{1}{c|}{$\underline{0.00}$} & $58.00$ & $44.00$ & $26.00$ & $34.00$ & $28.00$ & $54.00$ & $53.14$ & $52.86$ \\ \cmidrule{6-6}
Step 5 & $0.00$ & $0.00$ & $0.00$ & $0.00$ & \multicolumn{1}{c|}{$\underline{2.00}$} & $42.00$ & $22.00$ & $30.00$ & $28.00$ & $56.00$ & $53.44$ & $53.09$ \\ \cmidrule{7-7}
Step 6 & $0.00$ & $0.00$ & $0.00$ & $0.00$ & $2.00$ & \multicolumn{1}{c|}{$\underline{0.00}$} & $16.00$ & $34.00$ & $28.00$ & $56.00$ & $54.42$ & $54.01$ \\ \cmidrule{8-8}
Step 7 & $2.00$ & $0.00$ & $0.00$ & $0.00$ & $2.00$ & $0.00$ & \multicolumn{1}{c|}{$\underline{2.00}$} & $46.00$ & $26.00$ & $56.00$ & $54.87$ & $54.45$ \\ \cmidrule{9-9}
Step 8 & $4.00$ & $0.00$ & $0.00$ & $0.00$ & $2.00$ & $0.00$ & $2.00$ & \multicolumn{1}{c|}{$\underline{2.00}$} & $26.00$ & $60.00$ & $54.93$ & $54.48$ \\ \cmidrule{10-10}
Step 9 & $4.00$ & $0.00$ & $0.00$ & $0.00$ & $6.00$ & $2.00$ & $2.00$ & $2.00$ & \multicolumn{1}{c|}{$\underline{4.00}$} & $52.00$ & $55.27$ & $54.79$ \\ \cmidrule{11-11}
Step 10& $4.00$ & $0.00$ & $0.00$ & $0.00$ & $6.00$ & $2.00$ & $2.00$ & $4.00$ & $4.00$ & \multicolumn{1}{c|}{$\underline{4.00}$} & $55.16$ & $54.64$ \\
\bottomrule[1pt]
\end{tabular}}
\label{tab:vitB_in10_lam09}
\end{table*}

\section{Naive averaging aggregation}
Naive averaging aggregation is used as an ablation variant of \projectname{} by replacing the proposed conflict-averse aggregation with direct averaging over all accumulated task vectors. Given the task vectors $\{\tau^{(i)}\}_{i=1}^{N}$ computed from the fixed original CLIP anchor, the averaged task vector is:
\begin{equation}
\tau_{\mathrm{avg}} = \frac{1}{N}\sum_{i=1}^{N}\tau^{(i)}.
\end{equation}
The unlearned model is then obtained as:
\begin{equation}
\theta_{\mathrm{u}}^{(t)} = \theta^{(0)} + \lambda \tau_{\mathrm{avg}}^{(t)}.
\end{equation}
All other settings, including task-vector construction, top-$k\%$ trimming, scaling factor, and evaluation protocol, are kept the same as \projectname{}.

\partitle{Additional ablation results for aggregation strategies}
Table~\ref{tab:aggregation_stepwise} provides additional step-wise results for the aggregation ablation on CIFAR-100. Naive average aggregation directly averages historical task vectors without considering sign conflicts across unlearning steps. As a result, several previously forgotten classes show increased target accuracy in later steps, indicating potential knowledge re-emergence caused by conflicting updates.
In contrast, \projectname{} consistently keeps the accuracy of forgotten classes close to zero across subsequent steps and achieves a lower Avg. $\Delta$ with a higher Avg. Score. These results further support the observations in the main paper: directly averaging task vectors is insufficient for stable continual unlearning, while conflict-averse aggregation better balances effectiveness, fidelity, and persistence.

\begin{table*}[htbp]
\centering
\caption{Ablation study of aggregation strategies on Imagenet in continual unlearning. Underlined values indicate the class forgotten at the corresponding step.}
\label{tab:aggregation_stepwise}
\scriptsize
\setlength{\tabcolsep}{3.5pt}
\resizebox{\textwidth}{!}{
\begin{tabular}{cc|ccccc|cccccccc}
\toprule[1pt]
\multirow{2}{*}{Aggregation}
& \multirow{2}{*}{Step} 
& \multicolumn{5}{c|}{Target$\downarrow$} 
& \multirow{2}{*}{Retain$\uparrow$} 
& \multirow{2}{*}{All$\uparrow$} 
& \multirow{2}{*}{Food$\uparrow$} 
& \multirow{2}{*}{STL$\uparrow$} 
& \multirow{2}{*}{ObjectNet$\uparrow$} 
& \multirow{2}{*}{CIFAR-10$\uparrow$} 
& \multirow{2}{*}{Avg. $\Delta\downarrow$} 
& \multirow{2}{*}{Avg. Score$\uparrow$} \\
& 
& Class 1 
& Class 2  
& Class 3 
& Class 4
& Class 5 
& & & & & & & & \\
\midrule[1pt]

Original & Step 0 
& $38.00$ & $46.00$ & $34.00$ & $52.00$ & $88.00$ 
& $71.72$ & $71.62$ & $92.05$ & $99.41$ & $51.86$ & $95.32$ 
& -- & -- \\
\cmidrule{1-15}

\multirow{5}{*}{Naive Avg.}
& Step 1 & $\underline{0.00}$ & $64.00$ & $36.00$ & $46.00$ & $88.00$ 
& $70.93$ & $70.81$ & $91.348$ & $99.42$ & $51.02$ & $90.62$ 
& -- & -- \\
& Step 2 & $0.00$ & $\underline{0.00}$ & $34.00$ & $50.00$ & $90.00$ 
& $71.23$ & $71.05$ & $91.60$ & $99.40$ & $51.43$ & $93.24$ 
& -- & -- \\
& Step 3 & $0.00$ & $0.00$ & $\underline{2.00}$ & $52.00$ & $90.00$ 
& $71.36$ & $71.15$ & $91.62$ & $99.42$ & $51.54$ & $93.95$ 
& -- & -- \\
& Step 4 & $0.00$ & $4.00$ & $4.00$ & $\underline{16.00}$ & $90.00$ 
& $71.19$ & $70.95$ & $91.62$ & $99.38$ & $51.57$ & $94.58$ 
& -- & -- \\
& Step 5 & $2.00$ & $6.00$ & $4.00$ & $20.00$ & $\underline{46.00}$ 
& $71.33$ & $71.05$ & $91.78$ & $99.41$ & $50.61$ & $94.49$ 
& $2.80$ & $88.57$ \\
\cmidrule{1-15}

\multirow{5}{*}{Ours}
& Step 1 & $\underline{0.00}$ & $66.00$ & $36.00$ & $46.00$ & $88.00$ 
& $70.66$ & $70.55$ & $91.22$ & $99.41$ & $51.19$ & $89.91$ & -- & -- \\
& Step 2 & $0.00$ & $\underline{0.00}$ & $40.00$ & $50.00$ & $90.00$ 
& $70.69$ & $70.52$ & $91.23$ & $99.36$ & $50.42$ & $90.89$ & -- & -- \\
& Step 3 & $0.00$ & $0.00$ & $\underline{0.00}$ & $52.00$ & $86.00$ 
& $70.71$ & $70.49$ & $91.13$ & $99.40$ & $50.59$ & $92.37$ & -- & -- \\
& Step 4 & $0.00$ & $0.00$ & $0.00$ & $\underline{0.00}$ & $90.00$ 
& $69.92$ & $69.66$ & $90.99$ & $99.22$ & $50.17$ & $93.65$ & -- & -- \\
& Step 5 & $0.00$ & $0.00$ & $0.00$ & $8.00$ & $\underline{10.00}$ 
& $70.22$ & $69.89$ & $91.11$ & $99.31$ & $50.15$ & $93.26$ & ${\textbf{1.60}}$ & ${\textbf{96.56}}$ \\

\bottomrule[1pt]
\end{tabular}}
\end{table*}

\section{Broader Impact and Limitations}

\subsection{Broader Impact}
\label{subsec:broader impact}
This work studies continual machine unlearning for vision-language models, aiming to support responsible deployment of large-scale multimodal systems. By enabling models to process sequential removal requests, \projectname{} can help address practical concerns related to privacy protection, copyright compliance, and the removal of undesirable or sensitive content. Compared with retraining from scratch, the proposed method provides a more efficient alternative, which may reduce computational cost and make unlearning more accessible in real-world deployments. At the same time, machine unlearning should be applied with care. Improved unlearning techniques may be used to remove harmful or unauthorized knowledge, but they may also be misused to selectively erase information for improper purposes, such as hiding model behavior or removing evidence of biased training data. Therefore, continual unlearning should be accompanied by transparent evaluation protocols and appropriate auditing mechanisms to ensure that deletion requests are handled faithfully and responsibly.

\subsection{Limitations}
\label{subsec:limitations}
Although \projectname{} shows strong performance under both single-shot and continual unlearning settings, several limitations remain. First, our experiments are mainly conducted on CLIP-based models and classification-oriented benchmarks. Extending the method to broader VLM architectures, such as generative multimodal models, remains an important direction. Second, our current setting focuses on class-level forgetting, where each unlearning step removes one target class. More fine-grained scenarios, such as instance-level, concept-level, or text-only forgetting, may introduce additional challenges. Third, the proposed method relies on task vectors obtained by fine-tuning on forget sets. Although the top-$k\%$ trimming strategy reduces storage overhead, maintaining task vectors for long unlearning sequences may still require additional memory. Finally, while our conflict-averse aggregation mitigates knowledge re-emergence, it does not provide formal deletion guarantees. Developing verifiable and theoretically grounded continual unlearning methods for VLMs remains an important future direction.


\end{document}